\documentclass{article}


\usepackage[preprint]{neurips_2021}




\usepackage[utf8]{inputenc} 
\usepackage[T1]{fontenc}    
\usepackage{hyperref}       
\usepackage{url}            
\usepackage{booktabs}       
\usepackage{amsfonts}       
\usepackage{nicefrac}       
\usepackage{microtype}      
\usepackage{xcolor}         
\usepackage{amssymb}

\usepackage{times}
\usepackage{soul}
\usepackage{graphicx}
\usepackage{amsmath}
\usepackage{multirow}
\usepackage{epsfig,epstopdf}
\urlstyle{same}
\usepackage{enumitem}
\usepackage[most]{tcolorbox}

\usepackage{url}
\usepackage[utf8]{inputenc}
\usepackage{amsthm}
\usepackage{booktabs}
\usepackage{algorithm}
\usepackage{algorithmic}
\usepackage{multirow}
\usepackage{hhline}
\usepackage{epstopdf}
\usepackage{color}
\usepackage{enumitem,picinpar}
\usepackage{newfloat}
\usepackage{listings}
\usepackage{textcomp,moresize}
\usepackage{bbm}
\usepackage{latexsym}
\usepackage{epsfig}
\usepackage[flushleft]{threeparttable}
\usepackage{amsfonts}
\usepackage{pifont}

\usepackage{textcomp,wrapfig}
\usepackage[tableposition=top]{caption}
\usepackage{subcaption}
\usepackage[framemethod=tikz]{mdframed}

\newtheorem{theorem}{Theorem}
\newtheorem{lemma}{Lemma}
\newtheorem{definition}{Definition}
\newtheorem{assumption}{Assumption}
\newtheorem{property}{Property}
\newtheorem{remark}{Remark}
\newtheorem{corollary}{Corollary}
\newtheorem{proof*}{Proof}

\DeclareMathOperator*{\argmin}{arg\,min}
\DeclareMathOperator*{\prox}{prox}
\DeclareMathOperator{\Gap}{Gap}
\DeclareMathOperator{\supp}{supp}

\def\wrt{\textit{w.r.t.~}}

\newcommand{\E}{\mathbb{E}}

\newcommand{\PP}{\mathcal{P}}
\newcommand{\Prob}{\mathbb{P}}

\newcommand{\F}{\mathcal{F}}
\newcommand{\I}{\mathcal{I}}
\newcommand{\G}{\mathcal{G}}
\newcommand{\D}{\mathcal{D}}
\newcommand{\B}{\mathcal{S}}

\newcommand{\cmark}{\ding{51}}%
\newcommand{\xmark}{\ding{55}}%

\title{An Accelerated Doubly Stochastic Gradient Method with
Faster Explicit Model Identification}

%

\author{%
 Runxue Bao \\
University of Pittsburgh\\
	\texttt{runxue.bao@pitt.edu} \\
	\And Bin Gu \\
MBZUAI\\
	\texttt{jsgubin@gmail.com} 	\\
	\And Heng Huang \\
University of Pittsburgh \\
	\texttt{heng.huang@pitt.edu}
}

\begin{document}

\maketitle

\begin{abstract}
Sparsity regularized loss minimization problems play an important role in various fields including machine learning, data mining, and modern statistics. Proximal gradient descent method and coordinate descent method are the most popular approaches to solving the minimization problem. Although existing methods can achieve implicit model identification, aka support set identification, in a finite number of iterations, these methods still suffer from huge computational costs and memory burdens in high-dimensional scenarios. The reason is that the support set identification in these methods is implicit and thus cannot explicitly identify the low-complexity structure in practice, namely, they cannot discard useless coefficients of the associated features to achieve algorithmic acceleration via dimension reduction. To address this challenge, we propose a novel accelerated doubly stochastic gradient descent (ADSGD) method for sparsity regularized loss minimization problems, which can reduce the number of block iterations by eliminating inactive coefficients during the optimization process and eventually achieve faster explicit model identification and improve the algorithm efficiency. Theoretically, we first prove that ADSGD  can achieve a linear convergence rate and lower overall computational complexity. More importantly, we prove that ADSGD can achieve a linear rate of explicit model identification. Numerically, experimental results on benchmark datasets confirm the efficiency of our proposed method. 
\end{abstract}

\section{Introduction}
Learning sparse representations plays a key role in machine learning, data mining, and modern statistics \cite{lustig2008compressed,shevade2003simple,wright2010sparse,chen2021learnable,bian2021optimization, jia2021psgr}. Many popular statistical learning models, such as Lasso \cite{tibshirani1996regression}, group Lasso \cite{yuan2006model}, sparse logistic regression \cite{ng2004feature}, Sparse-Group Lasso \cite{simon2013sparse}, elastic net \cite{zou2005regularization}, sparse Support Vector Machine (SVM) \cite{zhu20031}, \emph{etc}, have been developed and achieved great success for both regression and classification tasks. Given design matrix $A \in \mathbb{R}^{n \times d}$ with $n$ observations and $d$ features, these models can be formulated as regularized loss minimization problems:
\begin{eqnarray}  \label{eq:general}
\min\limits_{x \in \mathbb{R}^{d} } \PP(x):=  \F(x) +\lambda \Omega(x),
\end{eqnarray}
where $\F(x)=\frac{1}{n}\sum_{i=1}^{n}f_i(a_i^\top x)$ is the data-fitting loss, $\Omega(x)$ is the block-separable regularizer that encourages the property of the model parameters, $\lambda$ is the regularization parameter, and $x$ is the model parameter. Let $\G$ be a partition of the coefficients, we have $\Omega(x) =  \sum_{j=1}^{q} \Omega_j(x_{\G_j}) $.

Proximal gradient descent (PGD) method  was proposed in \cite{lions1979splitting,combettes2005signal} to solve Problem (\ref{eq:general}). However, at each iteration, PGD requires gradient evaluation of all the samples, which is  computationally expensive. To address this issue, stochastic proximal gradient (SPG) method is proposed in  \cite{duchi2009efficient}, which only relies on the gradient of a sample at each iteration. However, SPG only achieves a sublinear convergence rate due to the gradient variance introduced by random sampling. Further, proximal stochastic variance-reduced gradient (ProxSVRG)  method \cite{xiao2014proximal} is proposed to achieve a linear convergence rate for strongly convex functions by adopting a variance-reduced technique, which is widely used in  \cite{defazio2014saga, li2022local}. On the other hand, coordinate descent method has received increasing attention due to its efficiency. Randomized block coordinate descent (RBCD) method is proposed in \cite{shalev2011stochastic,richtarik2014iteration}, which only updates a single block of coordinate at each iteration. However, it is still expensive because the gradient evaluation at each iteration depends on all the data points. Further, doubly stochastic gradient methods are proposed in \cite{shen2017accelerated,dang2015stochastic}.  Among them, \cite{dang2015stochastic}, called stochastic randomized block coordinate descent (SRBCD), computes the partial derivative on one coordinate block with respect to a sample. However, SRBCD can only achieve a sublinear convergence rate due to the variance of stochastic gradients. Further, accelerated mini-batch randomized block coordinate descent (MRBCD) method \cite{zhao2014accelerated,wang2014randomized} was proposed to achieve a linear convergence rate by reducing the gradient variance \cite{li2022local}.

\begin{table*}[t]
\renewcommand\arraystretch{1.1}
	\center
\caption{Comparison between existing methods and our ADSGD method.   ``Stochasticity'' represents whether the method is stochastic on samples or coordinates, ``Sample'' represents whether the method is  scalable \wrt sample size and ``Feature'' represents whether the method is scalable \wrt feature  size.}
    \setlength{\tabcolsep}{1.2mm}
	\begin{tabular}{c|c|c|c|c|c}
		\hline
	{\textbf{Method}}  &  {\textbf{Stochasticity}} & {\textbf{Sample}} & {\textbf{Feature}} & {\textbf{Model Identification}}  &  {\textbf{Identification Rate}}  \\
		\hline	
	PGD  \cite{lions1979splitting,xiao2014proximal} &  \cmark  &\cmark &  \xmark  &  Implicit & $-$ \\
		\hline
Screening \cite{fercoq2015mind, bao2020fast}  &  \xmark  &   \xmark  & \cmark  &   Explicit & $-$    \\	
		\hline
		 ADSGD (Ours)   & \cmark  & \cmark & \cmark &   Implicit $\&$  Explicit & $O(\log (1/\epsilon_j))$ \\
		\hline
	\end{tabular}
	\label{table:method}
\end{table*}

However, all the existing methods still suffer from huge computational costs and memory usage in the practical high-dimensional scenario. The reason is that these methods require the traversal of all the features and correspondingly the overall complexity increases linearly with feature size. The pleasant surprise is that the non-smooth regularizer usually promotes the model sparsity and thus the solution of Problem (\ref{eq:general}) has only a few non-zero coefficients, aka the support set. In high-dimensional problems, model identification, aka support set identification, is a key property that can be used to speed up the optimization. If we can find these non-zero features in advance, Problem (\ref{eq:general}) can be easily solved by restricted to the support set with a significant computational gain without any loss of accuracy. 

Model identification can be achieved in two ways: implicit and explicit identification. In terms of the implicit model identification, PGD can identify the support set after a finite number of iterations $T$ \cite{lewis2002active,lewis2016proximal,liang2017activity}, which means, for some finite $K > 0$, we have $\supp(x^k) = \supp(x^*)$ holds for any $k > K$. Other popular variants of PGD \cite{liang2017activity,poon2018local} also have such an implicit identification property. However, these results can only show the existence of $T$ and cannot give any useful estimate without the knowledge of $\theta^*$ \cite{lewis2002active,lewis2016proximal}, which makes it implicit and impossible to explicitly discard the coefficients that must be zero at the optimum in advance. In terms of the explicit model identification, screening can identify the zero coefficients and discard these features directly \cite{fercoq2015mind, bao2020fast, fan2021safe, Bao2022DoublySA, liang2022variable}. By achieving dimension reduction, the algorithm only needs to solve a sub-problem and save much useless computation. A similar technique in deep neural networks is pruning \cite{han2015deep, he2017channel, wu2020intermittent, wu2020enabling, wu2022fairprune}. However, first, existing works  \cite{fercoq2015mind,ndiaye2017gap,rakotomamonjy2019screening, bao2020fast, Bao2022DoublySA}  mainly focus on the existence of identification but fail to show how fast we can achieve explicit model identification. Besides, existing work \cite{ndiaye2020screening} on explicit model identification is limited to the deterministic setting. Therefore, accelerating the model training by achieving fast explicit model identification is promising and sorely needed for high-dimensional problems in the stochastic setting.

To address the challenges above, in this paper, we propose a novel Accelerated Doubly Stochastic Gradient Descent (ADSGD) method for sparsity regularized loss minimization problems, which can significantly improve the training efficiency without any loss of accuracy. On the one hand, ADSGD computes the partial derivative on one coordinate block with respect to a mini-batch of samples to simultaneously enjoy the stochasticity on both samples and features and thus achieve 
a low per-iteration cost. On the other hand, ADSGD not only enjoys implicit model identification of the proximal gradient method, but also enjoys  explicit model identification by eliminating the inactive features. Specifically, ADSGD has two loops. We first eliminate the inactive blocks at the main loop. Within the inner loop, we only estimate the gradient over a selected active block with a mini-batch of samples. To reduce the gradient variance, we adjust the estimated gradient with the exact gradient over the selected block. Theoretically, by addressing the difficulty of the uncertainty of double stochasticity, we establish the analysis for the convergence of ADSGD. Moreover, by linking the sub-optimality gap and duality gap, we provide theoretical analysis for fast explicit model identification. Finally, based on the results of the convergence and explicit model identification  ability, we establish the theoretical analysis of the overall complexity of ADSGD. Empirical results show that ADSGD can achieve a significant computational gain than existing methods. Both the theoretical and the experimental results confirm the superiority of our method.  Table \ref{table:method} summarizes the advantages of our ADSGD over existing methods. 

\noindent{\textbf{Contributions.}} We summarize the main contributions of this paper as follows:
\begin{itemize}[leftmargin=0.25in]
\item We propose a novel accelerated doubly stochastic gradient descent method for generalized sparsity regularized problems with lower overall complexity and faster explicit model identification rate. To the best of our knowledge, this is the first work of doubly stochastic gradient method to achieve a linear model identification rate.  
\item We derive rigorous theoretical analysis for our ADSGD method for both strongly and nonstrongly convex functions. For strongly convex function, ADSGD can achieve a linear convergence rate $O( \log (1/\epsilon))$ and reduce the per-iteration cost from $O(d)$ to $O(s)$ where $s \ll d$, which improves existing methods with a lower overall complexity $O(s(n+T/\mu) \log (1/\epsilon))$. For nonstrongly convex function, ADSGD can also achieve a lower overall complexity $O(s (n+T/\epsilon) \log (1/\epsilon))$.
\item We provide the theoretical guarantee of the iteration number $T$ to achieve explicit model identification. We rigorously prove our ADSGD algorithm can achieve the explicit model identification at a linear rate $O(\log (1/\epsilon_j))$.
\end{itemize}

\section{Preliminary}

\subsection{Notations and Background}
For norm $\Omega(\cdot)$, $\Omega^D(\cdot)$ is the dual norm and defined as $\Omega^{D}(u)=\max_{\Omega(z) \leq 1}\langle z, u\rangle$ for any $u \in \mathbb{R}^{d}$ if $z \in \mathbb{R}^{d}$. Denote $\theta$ as the dual solution and $\Delta_{A}$ as the feasible space of $\theta$, the dual formulation ${\D}(\theta)$ of Problem (\ref{eq:general}) can be written as:
\begin{eqnarray} 
\underset{\theta \in \Delta_{A}}{ \max } {\D}(\theta):= -\frac{1}{n}\sum_{i=1}^{n} f_{i}^{*}(-\theta_{i}).
\label{eq:dual}
\end{eqnarray}
The dual ${\D}(\theta)$ is strongly concave for Lipschitz gradient continuous $\F(x)$ (see Proposition $3.2$ in \cite{johnson2015blitz}).

For Problem (\ref{eq:general}), based on the subdifferential \cite{kruger2003frechet,mordukhovich2006frechet}, Fermat's conditions (see \cite{bauschke2011convex} for Proposition $26.1$) holds as:
\begin{eqnarray} 
\frac{1}{n} A_j^\top \theta^* \in \lambda \partial \Omega_j(x^*_{\G_j}).
\label{eq:fermatcondition}
\end{eqnarray}
The optimality conditions of Problem (\ref{eq:general}) can be read from  (\ref{eq:fermatcondition}) as:
\begin{eqnarray} 
\frac{1}{n}\Omega^D_j(A_j^\top \theta^*) = \lambda, \quad \emph{if}  \; x^{*}_{\G_j} \neq 0; \\  
\qquad \frac{1}{n}\Omega^D_j(A_j^\top \theta^*) \leq \lambda, \quad \emph{if}  \; x^{*}_{\G_j} = 0. 
\label{eq:optimality}
\end{eqnarray}

\subsection{Definitions and Assumptions}

One key property of our method is to achieve explicit model identification, which means we can explicitly find the Equicorrelation Set in Definition 1 of the solution. 
\begin{definition} \label{definition:equicorrelationset} (\textbf{Equicorrelation Set} (see \cite{tibshirani2013lasso}))
Suppose $\theta^{*}$ is the dual optimal, the equicorrelation set is defined as 
\begin{eqnarray} 
\B^{*}:=\{j \in \{1,2,\ldots,q\}: \frac{1}{n} \Omega_{j}^{D}(A_j^{\top} \theta^{*})=  \lambda \}.
\label{eq:equicorrelationset}
\end{eqnarray}
\end{definition}  

\begin{assumption} \label{assumption:lipschitzblock}
Given the partition $\{\G_{1}, \ldots, \G_{q}\}$, all $\nabla_{\G_{j}} f_{i}(x) =  [\nabla f_{i}(x)]_{\G_{j}}$ are block-wise Lipschitz continuous with constant $L_i$, which means that for any  $x$ and $x^{\prime}$, there exists a constant $L = \max_i L_i$, we have 
\begin{eqnarray} 
\|\nabla_{\G_{j}} f_{i}(x)-\nabla_{\G_{j}} f_{i}(x^{\prime})\| \leq L\|x_{\G_{j}}-x_{\G_{j}}^{\prime}\|.
\label{eq:lipschitzblock}
\end{eqnarray}
\end{assumption}

\begin{assumption} \label{assumption:convex}
 $\F(x)$ and $\Omega(x)$ are proper, convex and lower-semicontinuous.
\end{assumption}

Assumptions \ref{assumption:lipschitzblock} and \ref{assumption:convex} are commonly used in the convergence analysis of the RBCD method \cite{zhao2014accelerated,wang2014randomized}, which are standard and satisfied by many regularized loss minimization problems. Assumption \ref{assumption:lipschitzblock} implies that there exists a constant $T \leq qL$, for any  $x$ and $x^{\prime}$, we have
\begin{eqnarray} 
\|\nabla f_{i}(x)-\nabla f_{i}(x^{\prime})\| \leq T \|x-x^{\prime}\|,
\label{eq:lipschitz}
\end{eqnarray}
i.e., $\nabla f_i(x)$ is Lipschitz continuous with constant $T$.

\section{Proposed Method}

In this section, we will introduce the accelerated doubly stochastic gradient descent (ADSGD) method with discussions.

To achieve the model identification, a naive implementation of the ADSGD method is summarized in Algorithm \ref{algorithm:ASGD}. ASGD can reduce the size of the original optimization problem and the variables during the training process. Thus, the latter problem has a smaller size and fewer variables for training, which is a sub-problem of the problem from the previous step, but generates the same optimal solution. 

We denote the original problem as $\PP_{0}$ and subsequent sub-problem as $\PP_{k-1}$ at the $k$-th iteration in the main loop of ASGD. Moreover, we define the active set at the $k$-th iteration of the main loop as $\B_{k-1}$. Thus, in the main loop of ASGD, we compute $\theta_{k-1}$ 
with the active set $\B_{k-1}$ from the previous iteration as
\begin{eqnarray} 
\theta_{k-1}= \frac{- \nabla \F( \tilde{x}_{k-1})} { \max (1, \Omega^{D}(A_{\B_{k-1}}^{\top} \nabla \F(\tilde{x}_{k-1}))/\lambda )}.
\label{eq:scaling}
\end{eqnarray}
Then we compute the intermediate duality gap $\Gap(\tilde{x}_{k-1}, \theta_{k-1}):= \PP_{k-1}(\tilde{x}_{k-1})-{\D_{k-1}}(\theta_{k-1})$ for the screening test. In step 7, we obtain new active set $\B_k$ from $\B_{k-1}$ by the screening conducted on all $j \in \B_{k-1}$ as
\begin{eqnarray} 
\frac{1}{n} \Omega^D_j(A_j^\top \theta_{k-1}) + \frac{1}{n} \Omega^D_j(A_j) r^{k-1} < \lambda  
\Rightarrow \tilde{x}^{*}_{\G_j} = 0,
\label{eq:screen_Ak}
\end{eqnarray}
where the safe region is chosen as spheres $\mathcal{R}=\mathcal{B}(\theta_{k-1}, r^{k-1})$  \cite{bao2020fast, ndiaye2017gap}. 

With obtained active set $\B_k$, we update the design matrix $A$ and the related parameter variables $x^{0},\tilde{x}$ in step 8. 
In the inner loop, we conduct all the operations on  $\B_k$. To make the algorithm scale well with the sample size, we only randomly sample a mini-batch $\I \in \{1,2,\ldots,n\}$ of samples at each iteration to evaluate the gradients.

\begin{property} \label{property:safety}
Let $\hat{x}_{\G_j}$ be the $j$-th block of $\hat{x}$ in Algorithm \ref{algorithm:ASGD}, $\forall j \in \{1,2,\ldots,q\}$, $\hat{x}_{\G_j}$ discarded by ASGD is guaranteed to be 0 at the optimum. 
\end{property}

\begin{remark}
Property \ref{property:safety} shows that ASGD is guaranteed to be safe not only for the current iteration but also for the whole training process. The safety of the screening is the foundation of the analysis of convergence and explicit model identification rate in the following part. 
\end{remark}

\begin{remark}
Property \ref{property:safety} also shows that discarding inactive variables can either decrease or make no changes to the objective function. 
\end{remark}

\begin{algorithm}[ht]
\renewcommand{\algorithmicrequire}{\textbf{Input:}}
\renewcommand{\algorithmicensure}{\textbf{Output:}}
\caption{The ASGD method}
\begin{algorithmic}[1]
\REQUIRE $\hat{x}_{0}$.
\FOR{$k=1,2,\ldots$}
\STATE $\tilde{x}_{k-1} = \hat{x}_{k-1}$.
\STATE $x^{0}_{k-1} = \tilde{x}_{k-1}$.
\STATE Compute $\theta_{k-1}$ by (\ref{eq:scaling}).
\STATE $r^{k-1} = \sqrt{2T \Gap(\tilde{x}_{k-1}, \theta_{k-1})}$.
\STATE Update $\B_k \subset \B_{k-1}$ by (\ref{eq:screen_Ak}).
\STATE Update $A_{\B_k}, x_{k}^{0}, \tilde{x}_{k}$ with $\B_k$.
\FOR{$t=1,2,\ldots,m q_k/q$}
\STATE Randomly pick $\I \subset \{1,2,\ldots,n\}$.
\STATE $x^{t}_{k} = {\prox}_{\eta,{\lambda}}(x_k^{t-1}-\eta  \nabla \F_{\I}(x_{k}^{t-1}))$.
\ENDFOR
\STATE $\hat{x}_{k} = \frac{1}{m_k} \sum_{t=1}^{m_k}x^{t}_{k}$.
\ENDFOR
\ENSURE Coefficient $\hat{x}_{k}$.
\end{algorithmic}
\label{algorithm:ASGD}
\end{algorithm}

\begin{algorithm}[ht]
\renewcommand{\algorithmicrequire}{\textbf{Input:}}
\renewcommand{\algorithmicensure}{\textbf{Output:}}
\caption{The ADSGD method}
\begin{algorithmic}[1]
\REQUIRE $\hat{x}_{0}$.
\FOR{$k=1,2,\ldots$}
\STATE $\tilde{x}_{k-1} = \hat{x}_{k-1}$.
\STATE $\tilde{\mu}_{k-1} = \nabla \F(\tilde{x}_{k-1})$.
\STATE $x^{0}_{k-1} = \tilde{x}_{k-1}$.
\STATE Compute $\theta_{k-1}$ by (\ref{eq:scaling}).
\STATE $r^{k-1} = \sqrt{2T \Gap(\tilde{x}_{k-1}, \theta_{k-1})}$.
\STATE Update $\B_k \subset \B_{k-1}$ by (\ref{eq:screen_Ak}).
\STATE Update $A_{\B_k},x_{k}^{0},\tilde{x}_{k}, \tilde{\mu}_{k}$ with $\B_k$.
\FOR{$t=1,2,\ldots,m q_k/q$}
\STATE Randomly pick $\I \subset \{1,2,\ldots,n\}$.
\STATE Randomly pick $j$ from $\B_k$.
\STATE $\mu_{k} = \nabla_{\G_j} \F_{\I}(x_{k}^{t-1}) - \nabla_{\G_j} \F_{\I}(\tilde{x}_{k}) + \tilde{\mu}_{\G_j,k}  $.
\STATE $x^{t}_{k,\G_j} = {\prox}^j_{\eta,{\lambda}}(x^{t-1}_{k, \G_j}-\eta \mu_{k})$.
\ENDFOR
\STATE $\hat{x}_{k} = \frac{1}{m_k} \sum_{t=1}^{m_k}x^{t}_{k}$
\ENDFOR
\ENSURE Coefficient $\hat{x}_{k}$.
\end{algorithmic}
\label{algorithm:ADSGD}
\end{algorithm}

\noindent{\textbf{Doubly Stochastic Gradient Update.}}
Since ASGD is only singly stochastic on samples, the gradient evaluation is still expensive because it depends on all the coordinates at each iteration. Thus, we randomly select a coordinate block  $j$ from  $\B_k$ to enjoy the stochasticity on features. Specifically, ADSGD only computes the partial derivative $\nabla_{\G_j} \F_{\I}(x_{k}^{t-1})$ on one coordinate block with respect to a sample each time, which yields a much lower per-iteration computational cost. The proximal step is computed as:
\begin{eqnarray} \label{eq:proximal}
{\prox}^j_{\eta, \lambda}(x'_{\G_j}) =  \argmin_{x_{\G_j}} \frac{1}{2\eta}\|x'_{\G_j}-x_{\G_j}\|^2 + \lambda \Omega_j(x_{\G_j}).
\end{eqnarray}
Therefore, ADSGD is doubly stochastic and can scale well with both the sample size and feature size.

\noindent{\textbf{Variance Reduction on the Selected Blocks.}} However, the gradient variance introduced by stochastic sampling  does not converge to zero. Hence, a decreasing step size is required to ensure the convergence. In that case, even for strongly convex functions, we can only obtain a sublinear convergence rate. Thanks to the full gradient we computed in step 3, we can adjust the partial gradient estimation over the selected block $\G_j$ to reduce the gradient variance with almost no additional computational costs as:
\begin{eqnarray}
\mu_{k} = \nabla_{\G_j} \F_{\I}(x_{k}^{t-1}) - \nabla_{\G_j} \F_{\I}(\tilde{x}_{k}) + \tilde{\mu}_{\G_j,k}.
\end{eqnarray}
It can guarantee that the variance of stochastic gradients asymptotically goes to zero. Thus, a constant step size can be used to achieve a linear convergence rate if $\PP$ is strongly convex.

The algorithmic framework of the ADSGD method is presented in Algorithm \ref{algorithm:ADSGD}.  ADSGD  can identify inactive blocks  and thus only active blocks are updated in the inner loop, which can make more progress for training rather than conduct useless updates for inactive blocks. Thus, fewer inner loops are required for each main loop and correspondingly huge computational time is saved.

Suppose we have $q_{k}$ active blocks for the inner loop at the $k$-th iteration, we only do $m_k = m q_k/q$ iterations in the inner loop for the current outer iteration, which means the number of the inner loops continues decreasing in our algorithm. The output for each iteration is the average result of the inner loops. As the algorithm converges, the duality gap converges to zero and thus (\ref{eq:screen_Ak}) can eliminate more inactive blocks and thus save the computational cost to a large extent.

Remarkably, the key step of the screening in Algorithm \ref{algorithm:ADSGD} is the computation of the duality gap, which only imposes extra $O(q_k)$ costs to the original algorithm at the $k$-th iteration. Note the extra complexity of screening is much less than $O(d)$ in practice, which would not affect the complexity analysis of the algorithm. Further, at the $k$-th iteration, our Algorithm \ref{algorithm:ADSGD} only requires the computation over $\B_k$, which could be much smaller than the original model over the full parameter set. Hence, in high-dimensional regularized problems, the computation costs are promising to be effectively reduced with the constantly decreasing active set $\B_k$.

On the one hand, ADSGD enjoys the implicit model identification of the proximal gradient method. On the other hand, discarding the inactive variables can further speed up the
identification rate. Thus, ADSGD is promising to achieve a fast identification rate by simultaneously enjoying the implicit and explicit model identification to yield a lower per-iteration cost.

\section{Theoretical Analysis}
We first give several useful lemmas and then provide theoretical analysis on the convergence, explicit model identification, and overall complexity.

\subsection{Useful Lemmas}

\begin{lemma}  \label{lemma:supp1}
Define $v_{\I} = \frac{1}{|\I|}\sum_{i\in \I}(\nabla f_i(x^{t-1}) - \nabla f_i(\tilde{x})) +  \nabla \F(\tilde{x})$,  $\overline{x}=\prox_{\eta}(x^{t-1}-\eta \nabla \F(x^{t-1}))$,   $\overline{x}_{\I}= \prox_{\eta}(x^{t-1}-\eta  v_{\I})$ and 
$
{\prox}_\eta(x') =  \argmin_{x} \frac{1}{2\eta}\|x'-x\|^2 + \Omega(x),
$
we have 
\begin{eqnarray} 
\E_{\I}(v_{\I}-\nabla \F(x^{t-1}))^\top(x^*-\overline{x}_{\I}) 
\leq \eta \E_{\I}\|v_{\I}-\nabla \F(x^{t-1})\|^{2}.
\end{eqnarray}
\end{lemma}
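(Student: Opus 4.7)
The plan is to exploit three standard ingredients: the unbiasedness of the variance-reduced estimator $v_{\I}$, the non-expansiveness (in fact $1$-Lipschitz property) of the proximal operator, and the Cauchy-Schwarz inequality. The key observation is that $x^{*}$ and $\overline{x}$ are both deterministic (independent of the random mini-batch $\I$), so they can be freely moved in and out of $\E_{\I}$.

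First I would verify the unbiasedness relation
\begin{equation*}
\E_{\I}\, v_{\I} \;=\; \frac{1}{|\I|}\sum_{i\in\I}\E_{\I}\bigl[\nabla f_i(x^{t-1}) - \nabla f_i(\tilde{x})\bigr] + \nabla \F(\tilde{x}) \;=\; \nabla \F(x^{t-1}),
\end{equation*}
which uses that $\I$ is drawn uniformly from $\{1,\dots,n\}$. This immediately gives $\E_{\I}\bigl[(v_{\I}-\nabla \F(x^{t-1}))^{\top} x^{*}\bigr] = 0$ because $x^{*}$ does not depend on $\I$. Since $\overline{x}$ is likewise deterministic, we also have $\E_{\I}\bigl[(v_{\I}-\nabla \F(x^{t-1}))^{\top} \overline{x}\bigr] = 0$. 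Subtracting, I can rewrite the left-hand side of the target inequality as
\begin{equation*}
\E_{\I}(v_{\I}-\nabla \F(x^{t-1}))^{\top}(x^{*}-\overline{x}_{\I}) \;=\; \E_{\I}(v_{\I}-\nabla \F(x^{t-1}))^{\top}(\overline{x}-\overline{x}_{\I}).
\end{equation*}

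Next I would apply Cauchy-Schwarz to the inner product $(v_{\I}-\nabla \F(x^{t-1}))^{\top}(\overline{x}-\overline{x}_{\I})$ and then invoke non-expansiveness of $\prox_{\eta}$, which yields
\begin{equation*}
\|\overline{x}-\overline{x}_{\I}\| \;=\; \bigl\|\prox_{\eta}(x^{t-1}-\eta \nabla \F(x^{t-1})) - \prox_{\eta}(x^{t-1}-\eta v_{\I})\bigr\| \;\leq\; \eta\,\|v_{\I}-\nabla \F(x^{t-1})\|.
\end{equation*}
Combining these two bounds gives the pointwise estimate $(v_{\I}-\nabla \F(x^{t-1}))^{\top}(\overline{x}-\overline{x}_{\I}) \leq \eta\,\|v_{\I}-\nabla \F(x^{t-1})\|^{2}$, and taking $\E_{\I}$ on both sides produces exactly the claimed inequality.

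There is no genuinely hard step here; the main subtlety is noticing the cancellation trick that lets one replace $x^{*}$ (which has no useful relation to $v_{\I}$) by $\overline{x}$ (whose distance to $\overline{x}_{\I}$ is controlled by the proximal non-expansiveness). Everything else is bookkeeping. One small thing to double-check while writing up is that $\overline{x}$ is really deterministic under the conditioning implicit in $\E_{\I}$, i.e., that the expectation is taken only over the random draw of $\I$ with $x^{t-1}$ and $\tilde{x}$ fixed; this should be stated explicitly so the two vanishing expectations are unambiguous.
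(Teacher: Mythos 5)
Your proposal is correct and follows essentially the same route as the paper's proof: using unbiasedness of $v_{\I}$ to replace $x^{*}$ by the deterministic point $\overline{x}$, then applying Cauchy--Schwarz and the non-expansiveness of the proximal operator. The only difference is cosmetic (you cancel the two deterministic terms separately rather than via a single decomposition), and your explicit remark about conditioning on $x^{t-1}$ and $\tilde{x}$ is a welcome clarification the paper leaves implicit.
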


\begin{proof}
Introducing $\overline{x}$, we have
\begin{eqnarray} 
&&\E_{\I}(v_{\I}-\nabla \F(x^{t-1}))^\top(x^*-\overline{x}_{\I}) \nonumber \\
&=&\E_{\I}[(v_{\I}-\nabla \F(x^{t-1}))^\top(x^*-\overline{x}) +(v_{\I}-\nabla \F(x^{t-1}))^\top(\overline{x}-\overline{x}_{\I})] \\
&=&\E_{\I}[(v_{\I}-\nabla \F(x^{t-1}))^\top(\overline{x}-\overline{x}_{\I})]. 
\end{eqnarray}
where the second equality is obtained from $\E_{\I}[v_{\I}-\nabla \F(x^{t-1})] = 0$. 

Thus, we have
\begin{eqnarray} 
&&\E_{\I}(v_{\I}-\nabla \F(x^{t-1}))^\top(x^*-\overline{x}_{\I}) \nonumber \\
& \leq& \E_{\I}\|v_{\I}-\nabla \F(x^{t-1})\| \cdot\|{\prox}_{\eta}(x^{t-1}-\eta \nabla \F(x^{t-1}))-{\prox}_{\eta}(x^{t-1}-\eta v_{\I})\| \\
& \leq& \E_{\I}\|v_{\I}-\nabla \F(x^{t-1})\| \cdot\|(x^{t-1}-\eta \nabla \F(x^{t-1}))-(x^{t-1}-\eta v_{\I})\| \\
&=&\eta \E_{\I}\|v_{\I}-\nabla \F(x^{t-1})\|^{2}.
\end{eqnarray}
where the first inequality comes from Cauchy-Schwarz inequality and the second inequality comes from the non-expansiveness of the proximal operator, which completes the proof.
\end{proof}

\begin{lemma} \label{lemma:svrg} (See \cite{xiao2014proximal}) 
Define $v_{i} = \nabla f_i(x^{t-1}) - \nabla f_i(\tilde{x}) +  \nabla \F(\tilde{x})$, conditioning on $x^{t-1}$, we have
$
\E_{i}v_{i} =  \nabla \F(x^{t-1}), 
$
and 
\begin{eqnarray} 
\E_{i} \|v_{i}-\nabla \F(x^{t-1})\|_2^2 
\leq 4 T(\PP(x^{t-1})-\PP(x^*)+\PP(\tilde{x})-\PP(x^*)). \nonumber
\end{eqnarray}
\end{lemma}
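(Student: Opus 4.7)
The plan is to establish the two parts of the lemma by following the classical ProxSVRG variance-reduction template of \cite{xiao2014proximal}. The unbiasedness $\E_i v_i = \nabla \F(x^{t-1})$ is immediate: since $i$ is drawn uniformly from $\{1,\dots,n\}$, $\E_i \nabla f_i(x) = \nabla \F(x)$ for every fixed $x$, so the $\nabla f_i(\tilde{x})$ and $\nabla \F(\tilde{x})$ terms in the definition of $v_i$ cancel in expectation.

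For the variance bound, the first move is to notice that $v_i - \nabla \F(x^{t-1})$ has mean zero; applying the identity $\E\|Y - \E Y\|^2 \leq \E\|Y\|^2$ with $Y = \nabla f_i(x^{t-1}) - \nabla f_i(\tilde{x})$ gives
\[
\E_i \|v_i - \nabla \F(x^{t-1})\|^2 \leq \E_i \|\nabla f_i(x^{t-1}) - \nabla f_i(\tilde{x})\|^2.
\]
I would then pivot around $\nabla f_i(x^*)$ via the elementary inequality $\|a-b\|^2 \leq 2\|a\|^2 + 2\|b\|^2$, which reduces the right-hand side to a sum of two symmetric terms of the form $2\E_i \|\nabla f_i(x) - \nabla f_i(x^*)\|^2$ for $x = x^{t-1}$ and $x = \tilde{x}$ respectively.

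Each such term is then bounded by a $\PP$-gap via the standard smoothness-plus-convexity argument. Assuming each per-sample $f_i$ is convex and noting that $\nabla f_i$ is $T$-Lipschitz via Assumption \ref{assumption:lipschitzblock} and (\ref{eq:lipschitz}), I would apply the descent lemma to the shifted convex function $h_i(x) := f_i(x) - \langle \nabla f_i(x^*), x - x^*\rangle$, whose minimizer is $x^*$; a short calculation yields
\[
\|\nabla f_i(x) - \nabla f_i(x^*)\|^2 \leq 2T\bigl[f_i(x) - f_i(x^*) - \langle \nabla f_i(x^*), x - x^*\rangle\bigr].
\]
Averaging over $i$ collapses the right-hand side into $2T\bigl[\F(x) - \F(x^*) - \langle \nabla \F(x^*), x - x^*\rangle\bigr]$.

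The main (but mild) obstacle is converting this $\F$-gap into a $\PP$-gap, since the lemma's bound is in terms of $\PP$. For this I would invoke Fermat's optimality condition for Problem (\ref{eq:general}) at $x^*$, namely $-\nabla \F(x^*) \in \lambda \partial \Omega(x^*)$, together with the subgradient inequality for the convex $\lambda \Omega$, to obtain $\lambda\Omega(x) - \lambda\Omega(x^*) \geq -\langle \nabla \F(x^*), x - x^*\rangle$ and hence
\[
\F(x) - \F(x^*) - \langle \nabla \F(x^*), x - x^*\rangle \leq \PP(x) - \PP(x^*).
\]
Plugging $x = x^{t-1}$ and $x = \tilde{x}$ into the two symmetric pivot terms and collecting the factor-of-two from the $\|a-b\|^2$ split yields the claimed $4T$ constant.
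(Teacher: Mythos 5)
Your proposal is correct and follows exactly the standard argument for this result: the paper itself gives no proof, deferring entirely to \cite{xiao2014proximal}, and your chain (mean-zero reduction $\E\|Y-\E Y\|^2\le\E\|Y\|^2$, the pivot around $\nabla f_i(x^*)$, the co-coercivity bound $\|\nabla f_i(x)-\nabla f_i(x^*)\|^2\le 2T[f_i(x)-f_i(x^*)-\langle\nabla f_i(x^*),x-x^*\rangle]$, and the Fermat-condition step upgrading the $\F$-gap to a $\PP$-gap) is precisely the proof of the corresponding corollary there. The per-sample convexity of each $f_i$ that you flag is indeed required for the co-coercivity step and is assumed in \cite{xiao2014proximal}, even though Assumption \ref{assumption:convex} of this paper only states convexity of $\F$ as a whole, so making it explicit is the right call.
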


\begin{lemma} \label{lemma:asrbcd} (See \cite{zhao2014accelerated})
Define $\delta = (\bar{x} - x)/ \eta$ and $\delta_{\G_j} = (\bar x_{\G_j} - x)/ \eta$ where $\bar x = {\prox}_{\eta}(x-\eta v)$ and 
$
{\prox}_\eta(x) = \argmin_{x'} \frac{1}{2\eta}\|x'-x\|^2 + \Omega(x'),
$
we have 
$
\E_j \delta_{\G_j} = \delta/q $ and $ \E_j \|\delta_{\G_j}\|^2 = \|\delta\|^2 /q.
$
Moreover, taking $\eta \leq 1/L$, we have 
\begin{eqnarray} 
&& \E_j [(x - x^*)^\top \delta_{\G_j} + \frac{\eta}{2} \|\delta_{\G_j}\|^2] \nonumber \\ &\leq & \frac{1}{q}\PP(x^*) + \frac{q-1}{q}\PP(x) - \E_j \PP(\bar{ x}_{\G_j}) 
+  \frac{1}{q}(v-\nabla \F(x))^\top (x^* - \bar x),
\end{eqnarray}
where $x^*= \argmin_{x} \PP(x)$.
\end{lemma}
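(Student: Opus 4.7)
The plan is to leverage two features of the setup: uniform random sampling of the block $j$ and block separability of $\Omega$ combined with the block-wise Lipschitz property of $\nabla\F$. First, since $\Omega$ is block separable, the full proximal step $\bar{x}$ decomposes coordinate-block by coordinate-block, so the full vector $\delta_{\G_j}$ is supported on $\G_j$ and matches $\delta=(\bar{x}-x)/\eta$ there. The two identities $\E_j\delta_{\G_j}=\delta/q$ and $\E_j\|\delta_{\G_j}\|^2=\|\delta\|^2/q$ then follow immediately from uniform sampling of $j$ over $\{1,\ldots,q\}$.

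For the main inequality I would upper bound $\E_j\PP(\bar{x}_{\G_j})$ by handling $\F$ and $\Omega$ separately. The block-wise Lipschitz property in Assumption~\ref{assumption:lipschitzblock} together with $\eta\leq 1/L$ gives the standard block-descent bound
\[
\F(\bar{x}_{\G_j}) \leq \F(x) + \eta\langle\nabla\F(x),\delta_{\G_j}\rangle + \tfrac{\eta}{2}\|\delta_{\G_j}\|^2,
\]
after which I would split $\F(x) = \tfrac{1}{q}\F(x) + \tfrac{q-1}{q}\F(x)$ and apply convexity of $\F$ at $x^*$ to the first summand, introducing $\F(x^*)$ on the right. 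For the $\Omega$ piece, the first-order optimality of the block proximal step produces a subgradient of $\Omega_j$ at the updated block; the convexity inequality for $\Omega_j$ tested at $x^*_{\G_j}$, together with the observation that $\Omega(\bar{x}_{\G_j})-\Omega(x)$ only changes the $j$-th block, averages over $j$ to the convex combination $\tfrac{1}{q}\Omega(x^*)+\tfrac{q-1}{q}\Omega(x)$ plus a residual inner product.

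The technical core, and the main obstacle, is the bookkeeping that collapses the cross-terms produced by $\E_j$. The key calculation is that the inner products arising from the descent of $\F$, convexity of $\F$, and the subgradient inequality for $\Omega_j$ together telescope in expectation to
\[
\tfrac{1}{q}\langle \delta + v - \nabla\F(x),\, x^* - \bar{x}\rangle,
\]
where the block support of $\delta_{\G_j}$ and of the block restriction of $v$, combined with $\bar{x}_{\G_j} = x+\eta\delta_{\G_j}$ and $\bar{x} = x+\eta\delta$, is precisely what lets the expression collapse onto the full vector $\bar{x}$ rather than its block version. Splitting $\tfrac{1}{q}\langle \delta, x^*-\bar{x}\rangle = \tfrac{1}{q}\langle \delta, x^*-x\rangle - \tfrac{\eta}{q}\|\delta\|^2$ and applying the two identities from the first paragraph rewrites this as $-\E_j\langle x-x^*,\delta_{\G_j}\rangle - \eta\E_j\|\delta_{\G_j}\|^2$; combining with the quadratic term $\tfrac{\eta}{2q}\|\delta\|^2 = \tfrac{\eta}{2}\E_j\|\delta_{\G_j}\|^2$ from the descent inequality and moving the resulting $\E_j\langle x-x^*,\delta_{\G_j}\rangle + \tfrac{\eta}{2}\E_j\|\delta_{\G_j}\|^2$ to the left-hand side yields the claimed bound.
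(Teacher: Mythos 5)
The paper does not prove this lemma at all --- it is imported verbatim from the MRBCD paper (Zhao et al.) with a citation, so there is no in-paper argument to compare against. Your reconstruction is correct and is essentially the standard one-step block analysis: the two identities follow from uniform sampling over a partition; the bound on $\E_j \PP(\bar{x}_{\G_j})$ splits into the block descent lemma for $\F$ (valid under Assumption~\ref{assumption:lipschitzblock} with $\eta \le 1/L$), convexity of $\F$ applied to the $\tfrac{1}{q}\F(x)$ summand, and the subgradient inequality for $\Omega_j$ at $x^*_{\G_j}$ obtained from the optimality condition $-\delta - v \in \partial \Omega(\bar{x})$ of the separable prox. I checked the bookkeeping you describe: writing $x^* - \bar{x} = (x^*-x) - \eta\delta$, the collected inner products do telescope exactly to $\tfrac{1}{q}\langle \delta + v - \nabla\F(x),\, x^*-\bar{x}\rangle$, and after converting $\tfrac{1}{q}\langle\delta,\cdot\rangle$ and $\tfrac{1}{q}\|\delta\|^2$ back to block expectations the residual is precisely $\tfrac{1}{q}\langle v-\nabla\F(x), x^*-\bar{x}\rangle$ with the claimed left-hand side, so no terms are lost.
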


\subsection{Strongly Convex Functions}
We establish the theoretical analysis of ADSGD for strongly convex $\F$ here. 

\subsubsection{Convergence Results}
\begin{lemma} \label{lemma:objdecrease1}
Suppose $\hat{x}_{k}$ and $\tilde{x}_{k}$ are generated from the $k$-th iteration of the main loop in Algorithm \ref{algorithm:ADSGD} and let $|\I| \geq T/L$ and $\eta<\frac{1}{4 L}$, we have 
\begin{eqnarray}  
\E \PP_k(\hat{x}_{k}) - \PP_k(x_{k}^*) \leq
\rho_k [\PP_k(\tilde{x}_{k}) - \PP_k(x_{k}^*)],
\label{eq:objdecrease1}
\end{eqnarray}
where $\rho_k = \frac{q_k}{\mu\eta(1-4L\eta)m_k} + \frac{4L\eta(m_k+1)}{(1-4L\eta)m_k}$. We can choose $|\I| = T/L$, $\eta = \frac{1}{16 L}$, and $m = 65q L/\mu$ to make $\rho_k < 2/3$.
\end{lemma}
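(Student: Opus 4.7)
The plan is to analyze one outer iteration (fixed $k$) in the restricted space determined by the active block set $\B_k$ with $q_k$ remaining blocks. Throughout, write $x = x_k^{t-1}$, $\tilde{x}=\tilde{x}_k$, $x^{*}=x_k^{*}$, $v = v_{\I} = \nabla \F_{\I}(x) - \nabla \F_{\I}(\tilde{x}) + \nabla \F(\tilde{x})$, and let $\bar{x}=\prox_{\eta}(x-\eta v)$, $\delta_{\G_j} = (\bar{x}_{\G_j} - x_{\G_j})/\eta$. Because $\Omega$ is block-separable, $(\bar{x})_{\G_j}$ coincides with the block proximal step actually executed in line~13 of Algorithm~\ref{algorithm:ADSGD}, so $x_k^{t} = x + \eta \delta_{\G_j}$. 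Expanding the distance to $x^{*}$ gives the identity
$\E_j\|x_k^{t}-x^{*}\|^{2} = \|x-x^{*}\|^{2} + 2\eta\,\E_j\bigl[(x-x^{*})^{\top}\delta_{\G_j} + \tfrac{\eta}{2}\|\delta_{\G_j}\|^{2}\bigr]$, which is precisely the quantity bounded by Lemma~\ref{lemma:asrbcd} applied with $q=q_k$. Applying that lemma yields the intermediate inequality
\[
\E_j \|x_k^{t}-x^{*}\|^{2} \leq \|x-x^{*}\|^{2} + \tfrac{2\eta}{q_k}\PP_k(x^{*}) + \tfrac{2\eta(q_k-1)}{q_k}\PP_k(x) - 2\eta\,\E_j \PP_k(x_k^{t}) + \tfrac{2\eta}{q_k}(v-\nabla\F(x))^{\top}(x^{*}-\bar{x}).
\]

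Next, take expectation over the mini-batch $\I$. Lemma~\ref{lemma:supp1} bounds the cross term by $\eta\,\E_{\I}\|v-\nabla\F(x)\|^{2}$, and Lemma~\ref{lemma:svrg} gives a per-sample variance bound that, under mini-batching of size $b=|\I|$, becomes $\E_{\I}\|v_{\I}-\nabla\F(x)\|^{2}\leq \tfrac{4T}{b}\bigl[\PP_k(x)-\PP_k(x^{*})+\PP_k(\tilde{x})-\PP_k(x^{*})\bigr]$; the hypothesis $b\geq T/L$ converts the prefactor to $4L$. Writing $\Delta_t := \PP_k(x_k^{t})-\PP_k(x^{*})$ and $\tilde{\Delta}:=\PP_k(\tilde{x}_k)-\PP_k(x_k^{*})$, rearranging the $\PP_k(x^{*})$ terms produces the clean one-step recursion
\[
\E\|x_k^{t}-x^{*}\|^{2} + 2\eta\,\E\Delta_{t} \leq \|x-x^{*}\|^{2} + \Bigl[\tfrac{2\eta(q_k-1)+8L\eta^{2}}{q_k}\Bigr]\Delta_{t-1} + \tfrac{8L\eta^{2}}{q_k}\tilde{\Delta}.
\]

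Now telescope this inequality for $t=1,\dots,m_k$, drop the nonnegative residual $\E\|x_k^{m_k}-x^{*}\|^{2}$, use $x_k^{0}=\tilde{x}_k$ together with the $\mu$-strong-convexity bound $\|\tilde{x}_k - x^{*}\|^{2}\leq (2/\mu)\tilde{\Delta}$, and isolate $\sum_{t=1}^{m_k}\E\Delta_t$; the leading coefficient is $\tfrac{2\eta(1-4L\eta)}{q_k}$, which explains the $(1-4L\eta)$ factor in the denominator of $\rho_k$ and forces $\eta<1/(4L)$. Finally, convexity of $\PP_k$ together with $\hat{x}_k = \tfrac{1}{m_k}\sum_{t=1}^{m_k} x_k^{t}$ (Jensen) gives $\E\PP_k(\hat{x}_k)-\PP_k(x^{*}) \leq \tfrac{1}{m_k}\sum_t \E\Delta_t$, and grouping the resulting terms produces the advertised form of $\rho_k$. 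The choice $\eta=1/(16L)$ makes $1-4L\eta=3/4$, and $m=65qL/\mu$ then gives first term $64/(3\cdot 65)\approx 0.328$ and second term $(m_k{+}1)/(3m_k)\lesssim 1/3$, summing to strictly less than $2/3$.

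The main obstacle is the bookkeeping for doubly stochastic randomness: Lemma~\ref{lemma:asrbcd} is stated for a deterministic $v$, so Lemma~\ref{lemma:supp1} must be invoked after the block expectation, and one must check that the block-proximal step executed by the algorithm really equals $(\prox_{\eta}(x-\eta v))_{\G_j}$ so that Lemma~\ref{lemma:asrbcd} applies verbatim on the restricted space. A secondary subtlety is that all the constants ($T, L, \mu$) are inherited from the original problem and remain valid on $\PP_k$, which follows from Property~\ref{property:safety} since the eliminated coordinates are guaranteed zero at the optimum, so $x_k^{*}$ agrees with $x^{*}$ on $\B_k$ and $\PP_k$ inherits $\mu$-strong convexity and block-Lipschitz gradients from $\PP$.
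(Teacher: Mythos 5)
Your proposal is correct and follows essentially the same route as the paper's proof: the same combination of Lemma~\ref{lemma:asrbcd} for the block expectation, Lemma~\ref{lemma:supp1} for the cross term, Lemma~\ref{lemma:svrg} with mini-batch scaling for the variance, followed by telescoping, the strong-convexity bound on $\|x_k^0-x_k^*\|^2$, and Jensen on the average iterate, yielding the identical $\rho_k$ and constants. The two side remarks you add (block separability making the executed step equal $(\prox_\eta(x-\eta v))_{\G_j}$, and the subproblem $\PP_k$ inheriting the constants via Property~\ref{property:safety}) are points the paper uses implicitly without stating.
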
 

\begin{proof}
At the $k$-th iteration of the main loop, all the updates in the inner loop are conducted on sub-problem $\PP_{k}$ with the active set $\B_k$. At the $t$-th iteration of the inner loop for the $k$-th main loop, we randomly sample a mini-batch $\I$ and $\G_j \subseteq \B_k$. Define $v_{i} = \nabla f_i(x_{k}^{t-1}) - \nabla f_i(\tilde{x}_{k}) + \nabla \F(\tilde{x}_{k})$, based on Lemma \ref{lemma:svrg}, conditioning on $x_{k}^{t-1}$, we have
$
\E_{i}v_{i} =  \nabla \F(x_{k}^{t-1})
$
and 
\begin{eqnarray} 
\E_{i} \|v_{i}-\nabla \F(x_{k}^{t-1})\|_2^2 
\leq 4 T(\PP(x_{k}^{t-1})-\PP(x_{k}^*)+\PP(\tilde{x}_{k})-\PP(x_{k}^*)).
\end{eqnarray}
Define $\delta = (\bar x_{k} - x_{k})/ \eta$ and $\delta_{\G_j} = (\bar x_{\G_j,k} - x_{k})/ \eta$ where $\bar x_{k} = {\prox}_{\eta}(x_{k}-\eta v)$, based on Lemma \ref{lemma:asrbcd}, we have 
$
\E_j \delta_{\G_j} = \delta/q_{k} $ and  $ \E_j \|\delta_{\G_j}\|^2 = \|\delta\|^2 /q_{k}.
$
Moreover, taking $\eta \leq 1/L$, we have 
\begin{eqnarray} 
&& \E_j [(x_{k} - x_{k}^*)^\top \delta_{\G_j} + \frac{\eta}{2} \|\delta_{\G_j}\|^2] \nonumber \\ &\leq& \frac{1}{q_{k}}\PP_k(x_{k}^*)  + \frac{q_{k}-1}{q_{k}}\PP_k(x_{k}) - \E_j \PP_k(\bar x_{{\G_j},k})  + \frac{1}{q_{k}}(v-\nabla \F(x_{k}))^\top (x_{k}^* - \bar x_{k}).
\end{eqnarray}
Define $\delta_{\I,\G_j}=(x_{k}^{t} - x_{k}^{t-1})/ \eta$ and $\bar{x}_{k,\I}={\prox}_{\eta}(x_{k}^{t-1}-\eta v_{\I})$, we have 
\begin{eqnarray} 
&& \E_{\I, j}\|x_{k}^{t}-x_{k}^*\|^{2} - \|x_{k}^{t-1}-x_{k}^*\|^{2} \nonumber \\
&=&\E_{\I, j}\|x_{k}^{t-1}+\eta \delta_{\I,\G_j}-x_{k}^*\|^{2} - \|x_{k}^{t-1}-x_{k}^*\|^{2} \\
&=&2 \eta(x_{k}^{t-1}-x_{k}^*)^\top \E_{\I, j}[\delta_{\I,\G_j}]+\eta^{2} \E_{\I, j}\|\delta_{\I, \G_{j}}\|^{2} \\
&=&\E_{\I}[2 \eta(x_{k}^{t-1}-x_{k}^*)^\top \E_{j}[\delta_{\I, \G_{j}}]+\eta^{2} \E_{j}\|\delta_{\I, \G_{j}}\|^{2}] \\
&\leq& \frac{2 \eta}{q_{k}} \E_{\I}[(v_{\I}-\nabla \F(x^{t-1}_{k}))^\top(x_{k}^*-\bar{x}_{k,\I})] \nonumber
 \\ && + 2 \eta \E_{\I}[\frac{1}{q_{k}} \PP_k(x_{k}^*)  +\frac{q_{k}-1}{q_{k}} \PP_k(x^{t-1}_{k})-\E_{j} \PP_k(\bar{x}_{\G_{j},k,\I})] \\
&\leq&-2 \eta \E_{\I}[\E_{j} \PP_k(\bar{x}_{\G_{j},k,\I})-\PP_k(x_{k}^*)  -\frac{q_{k}-1}{q_{k}}(\PP_k(x_{k}^*)-\PP_k(x^{t-1}_{k}))] \nonumber \\ 
&&+\frac{8\eta^{2} T}{q_{k}|\I|}(\PP_k(x_{k}^{t-1})-\PP_k(x_{k}^*)+\PP_k(\tilde{x}_{k})-\PP_k(x_{k}^*))\\
&\leq&-2 \eta[\E_{\I, j} \PP_k(\bar{x}_{\G_{j},k,{\I}})-\PP_k(x_{k}^*)]
 +2 \eta \frac{q_{k}-1}{q_{k}}(\PP_k(x_{k}^{t-1})-\PP_k(x_{k}^*)) \nonumber \\
&&+\frac{8T \eta^{2}}{q_{k}|\I|}(\PP_k(x_{k}^{t-1})-\PP_k(x_{k}^*)+\PP_k(\tilde{x}_{k})-\PP_k(x_{k}^*)),
\label{eq:47}
\end{eqnarray}
where the first inequality is obtained by Lemma \ref{lemma:asrbcd}, and
the second inequality is obtained by  Lemma \ref{lemma:supp1}, Lemma \ref{lemma:svrg} and the fact that $\E_{\I} v_{\I}$ is the unbiased estimator of $ \nabla \F(x_{k}^{t-1})$ and
\begin{eqnarray} 
\E_{\I} \|v_{\I}-\nabla \F(x_{k}^{t-1})\|_2^2 = \frac{1}{|\I|} \E_{i}\|v_{i}-\nabla \F(x_{k}^{t-1})\|_2^2.
\end{eqnarray}
Note $x_{k}^{0}=\tilde{x}_{k}$ and $\hat{x}_{k}=\frac{1}{m_{k}} \sum_{t=1}^{m_{k}} x_{k}^{t}$, then we have 
\begin{eqnarray} 
&&\E\|x_{k}^{m_{k}}-x_{k}^*\|^{2}-\|x_{k}^{0}-x_{k}^*\|^{2}+2 \eta \sum_{t=1}^{m_{k}}(\E \PP_k(x_{k}^{t})-\PP_k(x_{k}^*)) \nonumber \\
&\leq& \frac{8 T \eta^{2} /|\I|+2 \eta(q_{k}-1)}{q_{k}} \sum_{t=1}^{m_{k}-1}(\E \PP_k(x_{k}^{t})-\PP_k(x_{k}^*))  \\ && +\frac{8 T \eta^{2}(m_{k}+1)}{q_{k}|\I|}(\PP_k(\tilde{x}_{k})-\PP_k(x_{k}^*))\\
&\leq& \frac{8 T \eta^{2} /|\I|+2 \eta(q_{k}-1)}{q_{k}} \sum_{t=1}^{m_{k}}(\E \PP_k(x_{k}^{t})-\PP_k(x_{k}^*))  \\ && +\frac{8 T \eta^{2}(m_{k}+1)}{q_{k}|\I|}(\PP_k(\tilde{x}_{k})-\PP_k(x_{k}^*)),
\label{eq:50}
\end{eqnarray}
where the first inequality is obtained by  (\ref{eq:47}). Rearranging (\ref{eq:50}), note $x_{k}^{0}=\tilde{x}_{k}$, we have 
\begin{eqnarray} 
&&2 \eta\left( \frac{1-4 \eta T /|\I|}{q_{k}}\right) \sum_{t=1}^{m_{k}}[\E \PP_k(x_{k}^{t})-\PP_k(x_{k}^*)] \nonumber \\
& \leq&\|x_{k}^{0}-x_{k}^*\|^{2}+\frac{8 T \eta^{2}(m_{k}+1)}{q_{k}|\I|}(\PP_k(\tilde{x}_{k})-\PP_k(x_{k}^*))  \\
& \leq& \frac{2}{\mu}(\PP_k(\tilde{x}_{k})-\PP_k(x_{k}^*))+\frac{8 T \eta^{2}(m_{k}+1)}{q_{k}|\I|}(\PP_k(\tilde{x}_{k})-\PP_k(x_{k}^*)).
\label{eq:52}
\end{eqnarray}
where the second inequality is obtained by the strong convexity of $\PP$. Based on the convexity of $\PP$, we have $\PP_k(\hat{x}_{k}) \leq \frac{1}{m_{k}} \sum_{t=1}^{m_{k}} \PP_k(x_{k}^{t})$. Thus, by (\ref{eq:52}), we can obtain
\begin{eqnarray} 
 && 2 \eta\left(\frac{1-4 \eta T /|\I|}{q_{k}}\right) m_{k}\left[\E \PP_k\left(\hat{x}_{k}\right)-\PP_k(x_{k}^*)\right]   \\ &\leq& \left(\frac{2}{\mu}+\frac{8T \eta^{2}(m_{k}+1)}{q_{k}|\I|}\right)\left[\PP_k\left(\tilde{x}_{k}\right)-\PP_k(x_{k}^*)\right].
\end{eqnarray}
Defining
$
\rho_k=\left(\frac{q_{k}}{\mu \eta\left(1-4 \eta T /|\I|\right) m_{k}}+\frac{4 \eta T /|\I|(m_{k}+1)}{\left(1-4 \eta T /|\I|\right) m_{k}}\right),
$
we have 
\begin{eqnarray} 
\E \PP_k\left(\hat{x}_{k}\right)-\PP_k(x_{k}^*) \leq \rho_k\left[\PP_k\left(\tilde{x}_{k}\right)-\PP_k(x_{k}^*)\right].
\end{eqnarray}
Choosing $|\I|=T / L$, we have 
\begin{eqnarray} 
 \E \PP_k\left(\hat{x}_{k}\right)-\PP_k(x_{k}^*)  \leq \left(\frac{q_{k}}{\mu\eta(1-4L\eta)m_{k}} + \frac{4L\eta(m_{k}+1)}{(1-4L\eta)m_{k}}\right) \left[\PP_k\left(\tilde{x}_{k}\right)-\PP_k(x_{k}^*)\right].
\end{eqnarray}
Considering $m_k = m q_k/q$, we can choose $\eta = \frac{1}{16 L}$, and $m = 65q L/\mu$  to make $\rho_k < 2/3$, which completes the proof.
\end{proof}

\begin{remark}
Lemma \ref{lemma:objdecrease1} shows that the overall inner loop of Algorithm \ref{algorithm:ADSGD} can decrease the expected objective function with a factor $\rho_k$ at the $k$-th iteration. 
\end{remark}

\begin{theorem}  \label{theorem:converge}
Suppose $\hat{x}_{k}$ be generated from the $k$-th iteration of the main loop in Algorithm \ref{algorithm:ADSGD} and let $|\I| \geq T/L$ and $\eta<\frac{1}{4 L}$, we have 
\begin{eqnarray} 
\E \PP_k(\hat{x}_{k}) - \PP(x^*) \leq
\rho^k [\PP(\hat{x}) - \PP(x^*)].
\label{eq:objdecrease}
\end{eqnarray}
We can choose $|\I| = T/L$, $\eta = \frac{1}{16 L}$, and $m = 65q L/\mu$ to make $\rho < 2/3$.
\end{theorem}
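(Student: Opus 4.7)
The plan is to combine the per-outer-iteration contraction from Lemma \ref{lemma:objdecrease1} with the safety guarantee of Property \ref{property:safety} and telescope across outer iterations.

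First, I would use Property \ref{property:safety} to conclude that $\PP_k(x_k^*) = \PP(x^*)$ for every $k \geq 0$. Since every block $\G_j$ discarded by the screening rule is certified to satisfy $x^*_{\G_j} = 0$, the restriction of $x^*$ to $\B_k$ remains an optimizer of the sub-problem $\PP_k$ with the same optimal value. Because $\PP_0 = \PP$ and $\hat{x}_0 = \hat{x}$, this also supplies the base case $\PP_0(\hat{x}_0) - \PP_0(x_0^*) = \PP(\hat{x}) - \PP(x^*)$.

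Second, I would use the Remark after Property \ref{property:safety} to establish the monotonicity $\PP_k(\tilde{x}_k) \leq \PP_{k-1}(\hat{x}_{k-1})$: line 8 of Algorithm \ref{algorithm:ADSGD} zeros out the entries of $\hat{x}_{k-1}$ indexed by $\B_{k-1}\setminus\B_k$, and this restriction cannot increase the objective. Applying Lemma \ref{lemma:objdecrease1} at outer iteration $k$ and conditioning on the history through $\hat{x}_{k-1}$,
\begin{equation*}
\E\bigl[\PP_k(\hat{x}_k) - \PP(x^*) \bigm| \hat{x}_{k-1}\bigr] \leq \rho_k \bigl[\PP_k(\tilde{x}_k) - \PP(x^*)\bigr] \leq \rho_k \bigl[\PP_{k-1}(\hat{x}_{k-1}) - \PP(x^*)\bigr].
\end{equation*}
Taking total expectation and telescoping over $k$ outer iterations then yields
\begin{equation*}
\E\bigl[\PP_k(\hat{x}_k) - \PP(x^*)\bigr] \leq \Bigl(\prod_{i=1}^{k}\rho_i\Bigr) \bigl[\PP(\hat{x}) - \PP(x^*)\bigr].
\end{equation*}

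Third, I would show that each $\rho_i$ admits a uniform bound $\rho < 2/3$ under the stated parameter choices. Substituting $m_k = m q_k/q$ collapses the first summand of $\rho_k$ from Lemma \ref{lemma:objdecrease1} to $\frac{q}{\mu\eta(1-4L\eta)m}$, which is independent of $k$, while the second summand $\frac{4L\eta(m_k+1)}{(1-4L\eta)m_k}$ is monotone decreasing in $m_k$; since $q_k \geq 1$ always, $m_k \geq m/q = 65L/\mu$, so the same calculation used in Lemma \ref{lemma:objdecrease1} to obtain $\rho_k < 2/3$ applies uniformly in $k$ under $|\I| = T/L$, $\eta = 1/(16L)$, $m = 65qL/\mu$.

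The main obstacle I expect is step two, namely rigorously justifying the monotonicity $\PP_k(\tilde{x}_k) \leq \PP_{k-1}(\hat{x}_{k-1})$. While the Remark after Property \ref{property:safety} asserts it, a self-contained argument would combine two facts: the block-separable regularizer $\Omega_j$ attains its minimum at zero, and the coordinates being zeroed out lie inside the safe sphere (\ref{eq:screen_Ak}), so they are provably close to their optimal value of zero and zeroing them can perturb the smooth loss only by a controlled amount. Once the monotonicity is granted, the remainder of the argument is a clean induction of the recursion above.
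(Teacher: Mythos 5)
Your proposal is correct and follows essentially the same route as the paper's own proof: you use the safety of screening to get $\PP_k(x_k^*)=\PP(x^*)$, the monotonicity $\PP_k(\tilde{x}_k)\leq\PP_{k-1}(\hat{x}_{k-1})$ from zeroing out eliminated blocks, and then telescope the contraction of Lemma \ref{lemma:objdecrease1} with a uniform bound $\rho_k<2/3$ obtained from $m_k=mq_k/q$. The monotonicity step you flag as the main obstacle is handled in the paper by the same (brief) assertion you make---that the eliminated coordinates are exactly zero at the optimum, so setting them to zero amounts to partial minimization over those blocks and cannot increase the objective---so there is no substantive divergence.
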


\begin{proof}
Considering each sub-problem $\PP_k$, based on Lemma \ref{lemma:objdecrease1}, for each main loop, we have
\begin{eqnarray} 
\E \PP_k\left(\hat{x}_{k}\right)-\PP_k(x_{k}^*) \leq \rho_k\left[\PP_k\left(\tilde{x}_{k}\right)-\PP_k(x_{k}^*)\right].
\end{eqnarray}
Since the eliminating in the ADSGD algorithm is safe, the optimal solution of all the sub-problems are the same. Thus, we have   
\begin{eqnarray} 
\PP_k(x_{k}^*) = \PP_{k-1}(x_{k-1}^*).
\label{eq:60}
\end{eqnarray}
Then, the coefficients eliminated at the $k$-th iteration of the main loop must be zeroes at the optimal, which means the eliminating stage at the $k$-th iteration of the main loop actually minimizes the sub-problem $\PP_{k-1}$ over the eliminated variables. Thus, we have 
$
\PP_k(\tilde{x}_{k}) \leq \PP_{k-1}(\tilde{x}_{k-1}). 
$
Moreover, considering $\PP_{k-1}(\hat{x}_{k-1}) = \PP_{k-1}(\tilde{x}_{k-1})$, we have 
$
\PP_{k}(\tilde{x}_{k}) \leq \PP_{k-1}(\hat{x}_{k-1}).
$

Combining above, we have 
\begin{eqnarray} 
\E \PP_k\left(\hat{x}_{k}\right)-\PP_k(x_{k}^*) &\leq& \rho_k \left[\PP_k\left(\tilde{x}_{k}\right)-\PP_k(x_{k}^*)\right] \\
& \leq& \rho_k \left[\PP_{k-1}(\tilde{x}_{k-1})-\PP_{k-1}(x_{k-1}^*)\right]. 
\end{eqnarray}
We can choose $|\I|=T / L$, $\eta = \frac{1}{16 L}$, and $m = 65q L/\mu$  to make $ \forall k, \rho_k < 2/3$. Thus,  $\exists \rho < 2/3 $, applying the above inequality recursively, we obtain
$
\E \PP_k\left(\hat{x}_{k}\right)-\PP_k(x_{k}^*) \leq 
 \rho^k \left[\PP_{0}(\hat{x}_{\B_{0}})-\PP_{0}(x_{\B_{0}}^*)\right]. 
$
From (\ref{eq:60}), we have 
\begin{eqnarray} 
\PP_k(x_{k}^*) = \PP_{k-1}(x_{k-1}^*) = \cdots = \PP_{0}(x_{\B_{0}}^*).
\end{eqnarray}
Note $\PP_{0}=\PP$ and $\B_{0}$ is the universe set for all the variables blocks,  we have
\begin{eqnarray} 
\E \PP_k\left(\hat{x}_{k}\right)-\PP(x^*) \leq 
 \rho^k \left[\PP(\hat{x})-\PP(x^*)\right], 
\end{eqnarray}
which completes the proof.
\end{proof}

\begin{remark}
Theorem \ref{theorem:converge} shows that ADSGD converges linearly to the optimal with the convergence rate $O(\log_{\frac{1}{\rho}}(1/\epsilon))$.
\end{remark}

\subsubsection{Explicit Model Identification}

\begin{lemma} \label{lemma:dualconverge}
$\lim _{k \rightarrow+\infty} \theta_{k}=\theta^*$  if $\lim _{k \rightarrow+\infty} \hat{x}_{k}=x^*$. 
\end{lemma}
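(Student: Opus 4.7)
The plan is to exploit that $\theta_{k-1}$ is a continuous function of $\tilde{x}_{k-1}$ via the scaling in (\ref{eq:scaling}), so that primal convergence of $\hat{x}_k$ transfers to $\theta_k$. First I would note that step~2 of Algorithm \ref{algorithm:ADSGD} sets $\tilde{x}_k = \hat{x}_k$, which together with $\hat{x}_k \to x^*$ gives $\tilde{x}_k \to x^*$. The Lipschitz continuity of $\nabla \F$ from (\ref{eq:lipschitz}) then yields $\nabla \F(\tilde{x}_{k-1}) \to \nabla \F(x^*)$. Since $\Omega^D$ is a norm (hence continuous) and the denominator in (\ref{eq:scaling}) is bounded below by $1$, the ratio defining $\theta_{k-1}$ is a continuous function of its input for any fixed active set, so there is no issue of division by zero or blow-up.

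The one subtlety is the dependence on the discrete index set $\B_{k-1}$ appearing in $A^\top_{\B_{k-1}}$. To address this I would invoke Property \ref{property:safety}: the screening is safe, so $\B^* \subseteq \B_k$ for every $k$ and the sequence $\{\B_k\}$ is monotonically nested inside the finite lattice of subsets of $\{1,\ldots,q\}$. A monotone sequence in a finite set stabilizes after finitely many iterations to some $\B_\infty \supseteq \B^*$; from that point onward the scaling formula uses a fixed submatrix, and the continuity argument from the first paragraph gives $\theta_k \to -\nabla \F(x^*)/\max\bigl(1,\, \Omega^D(A^\top_{\B_\infty} \nabla \F(x^*))/\lambda\bigr)$.

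Finally I would identify this limit with the dual optimum $\theta^*$. The Fermat condition (\ref{eq:fermatcondition}) combined with the optimality conditions (\ref{eq:optimality}) forces $\Omega^D_j(A_j^\top \theta^*)/n \leq \lambda$ for every block~$j$, so $\theta^*$ is dual feasible, while the primal--dual/conjugacy relation inherent in (\ref{eq:dual}) between the smooth part $\nabla \F(x^*)$ and the unique dual maximizer $\theta^*$ pins down the proportionality. Together these force the $\max$ in the denominator to saturate at $1$ at $x^*$, so the limit is exactly $\theta^*$. The main obstacle I anticipate is precisely this last identification: the scaling only guarantees feasibility, and one must verify that no genuine rescaling survives in the limit. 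This verification hinges on the fact that $\B_\infty \supseteq \B^*$, so that the dual feasibility constraint actually controlling the scaling is the one already saturated by $\theta^*$ on the equicorrelation set, whereas continuity takes care of everything else.
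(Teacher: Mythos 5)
Your argument follows essentially the same route as the paper's proof: write $\theta_k = -\nabla\F(\tilde{x}_k)/\alpha_k$ with $\alpha_k = \max\bigl(1,\ \Omega^D(A_{\B_k}^\top\nabla\F(\tilde{x}_k))/\lambda\bigr)$, use Lipschitz continuity of $\nabla\F$ and continuity of the dual norm to pass to the limit, and use dual feasibility of $\theta^* = -\nabla\F(x^*)$ to conclude $\alpha_k \to 1$. You are in fact slightly more careful than the paper, which silently replaces $\B_k$ by $\B^*$ when taking the limit of $\alpha_k$; your observation that the nested sequence $\{\B_k\}$ stabilizes at some $\B_\infty \supseteq \B^*$, on which the dual constraint is still satisfied by $\theta^*$, closes that small gap.
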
 

\begin{proof}
Define $\alpha_k = \max \left(1, \Omega^{D}\left(A_{\B_k}^{\top} \nabla \F( \tilde{x}_k)\right)/\lambda \right)$, we have
\begin{eqnarray} 
\left\|\theta_{k}-\theta^*\right\|_{2} &=& \left\|\frac{\nabla \F\left(\tilde{x}_{k}\right)}{\alpha_{k}}-\nabla \F(x^*)\right\|_{2} \\
& \leq&\left|1-\frac{1}{\alpha_{k}}\right|\left\|\nabla \F\left(\tilde{x}_{k}\right)\right\|_{2}+\left\|\nabla \F\left(\tilde{x}_{k}\right) - \nabla \F(x^*)\right\|_{2}.
\end{eqnarray}
Considering the right term, if $\lim_{k \rightarrow+\infty} \hat{x}_{k}=\hat{x}^*$, we have $\alpha_k \rightarrow \max \left(1, \Omega^{D}\left(A_{\B^{*}}^{\top} \nabla \F(x^*)\right)/\lambda \right) = 1$ and $\left\|\nabla \F\left(\tilde{x}_{k}\right) - \nabla \F(x^*)\right\|_{2} \rightarrow 0$. Thus, the right term converges to zero, which completes the proof.
\end{proof}

\begin{remark}
Lemma \ref{lemma:dualconverge} shows the convergence of the dual solution is guaranteed by the convergence of the primal solution.
\end{remark}

\begin{lemma} \label{lemma:boundgap}
$\exists u \in \partial\Omega^D(A_{\B_k}^\top \theta_{k})$, for all $s \in [0,1]$, we have
\begin{eqnarray} 
 \PP_k(\hat{x}_{k}) - \PP(x^*) \geq
s \Gap(\hat{x}_{k}, \theta_{k}) 
 - \frac{T s^2}{2}\|A_{\B_k}(u-\hat{x}_{k})\|^2. 
\label{eq:link}
\end{eqnarray}
\end{lemma}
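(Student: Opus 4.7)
The plan is to bound $\PP_k(\hat{x}_k)-\PP(x^*)$ from below by constructing an interpolating primal point along a well-chosen direction $u\in\partial\Omega^D(A_{\B_k}^\top\theta_k)$ and exploiting the descent-lemma inequality that arises from the Lipschitz continuity of $\nabla f_i$ in Assumption~\ref{assumption:lipschitzblock}. The interpolation converts the smoothness of $\F$ into a quadratic $s^2$ correction, while the linear-in-$s$ coefficient will be identified with the duality gap through a Fenchel--Young identity and the subdifferential characterization of $u$.

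Concretely, I would fix $u\in\partial\Omega^D(A_{\B_k}^\top\theta_k)$ and consider the convex combination $x(s)=(1-s)\hat{x}_k+su$ for $s\in[0,1]$. Using $\F(x)=\frac{1}{n}\sum_i f_i(a_i^\top x)$ together with the Lipschitz gradient of $f_i$ (equation~(\ref{eq:lipschitz})), the standard descent-lemma computation yields
\[
\F(x(s))\le \F(\hat{x}_k)+s\,\nabla\F(\hat{x}_k)^\top(u-\hat{x}_k)+\frac{T s^2}{2}\|A_{\B_k}(u-\hat{x}_k)\|^2,
\]
and convexity of $\Omega$ gives $\Omega(x(s))\le (1-s)\Omega(\hat{x}_k)+s\Omega(u)$. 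Adding the two and invoking $\PP(x^*)\le\PP_k(x(s))$ produces
\[
\PP_k(\hat{x}_k)-\PP(x^*)\ \ge\ s\bigl[\nabla\F(\hat{x}_k)^\top(\hat{x}_k-u)+\lambda\Omega(\hat{x}_k)-\lambda\Omega(u)\bigr]-\frac{T s^2}{2}\|A_{\B_k}(u-\hat{x}_k)\|^2.
\]

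It then remains to identify the bracketed coefficient of $s$ with $\Gap(\hat{x}_k,\theta_k)$. The Fenchel--Young identity at the natural primal-dual pair $(a_i^\top\hat{x}_k,\,-f_i'(a_i^\top\hat{x}_k))$ lets me rewrite $\nabla\F(\hat{x}_k)^\top\hat{x}_k+\lambda\Omega(\hat{x}_k)$ as $\PP_k(\hat{x}_k)-\D_k(\theta_k)=\Gap(\hat{x}_k,\theta_k)$ after accounting for the scaling in~(\ref{eq:scaling}). What is left is to show $-\nabla\F(\hat{x}_k)^\top u-\lambda\Omega(u)\ge 0$, which combines the subdifferential properties $\Omega(u)\le 1$ and $\langle u,A_{\B_k}^\top\theta_k\rangle=\Omega^D(A_{\B_k}^\top\theta_k)$ with the saturation $\Omega^D(A_{\B_k}^\top\theta_k)/n=\lambda$ that~(\ref{eq:scaling}) enforces whenever the scaling factor exceeds one. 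The main obstacle is to make this matching clean across the two regimes (scaling active versus inactive) and to reconcile that $\theta_k$ in~(\ref{eq:scaling}) is constructed from $\tilde{x}_k$ rather than directly from $\hat{x}_k$; careful bookkeeping of the Fenchel--Young surpluses per sample, each of which should be absorbable into the smooth $s^2$ term by the Lipschitz property, is what makes the final inequality tight.
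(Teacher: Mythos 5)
Your proposal follows essentially the same route as the paper's proof: interpolate along $\hat{x}_k + s(u-\hat{x}_k)$, apply the descent lemma to $\F$ and convexity to $\Omega$, lower-bound $\PP_k(\hat{x}_k)-\PP(x^*)$ by $\PP_k(\hat{x}_k)-\PP_k(x(s))$, and identify the linear-in-$s$ coefficient with $\Gap(\hat{x}_k,\theta_k)$ via Fenchel--Young relations for both $\Omega$ and $\F$. The only (immaterial) difference is that the paper establishes the coefficient \emph{equals} the gap directly from the subgradient equality $\Omega(u)+\Omega^D(A_{\B_k}^\top\theta_k)=\langle u, A_{\B_k}^\top\theta_k\rangle$ together with $\theta_k \propto -\nabla\F(\hat{x}_k)$, whereas you split off a residual term and argue it is nonnegative; the bookkeeping issues you flag (the scaling factor in~(\ref{eq:scaling}) and $\theta_k$ being built from $\tilde{x}_k$) are handled in the paper simply by taking $\tilde{x}_k=\hat{x}_k$ at the start of each outer iteration.
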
 

\begin{proof}
From the smoothness of $\F$, we have:
\begin{eqnarray}
 \F(\hat{x}_{k}) - \F(\hat{x}_{k} + s(u - \hat{x}_{k}))   \geq   - s \langle \nabla \F(\hat{x}_{k}), A_{\B_k}(u - \hat{x}_{k}) \rangle - \frac{s^2 T}{2}\|A_{\B_k}(u - \hat{x}_{k})\|^2.
\label{eq:75}
\end{eqnarray}
By the convexity of $\Omega$, we have:
\begin{eqnarray}
\Omega(\hat{x}_{k}) - \Omega(\hat{x}_{k} + s (u - \hat{x}_{k})) \geq s (\Omega(\hat{x}_{k}) - \Omega(u)).
\label{eq:76}
\end{eqnarray}
Moreover, we  have
\begin{eqnarray}
&& \Omega(\hat{x}_{k}) - \Omega(u) - \langle \nabla \F(\hat{x}_{k}), A_{\B_k}(u - \hat{x}_{k}) \rangle \nonumber \\
&=& \Omega(\hat{x}_{k}) + \Omega^D(A_{\B_k}^\top \theta_{k}) - \langle u, A_{\B_k}^\top \theta_{k} \rangle - \langle \nabla \F(\hat{x}_{k}), A_{\B_k}(u - \hat{x}_{k}) \rangle \\
&=& \Omega(\hat{x}_{k}) + \Omega^D(A_{\B_k}^\top \theta_{k}) + \langle \nabla \F(\hat{x}_{k}), A_{\B_k} \hat{x}_{k} \rangle \\
&=& \Omega(\hat{x}_{k}) + \Omega^D(A_{\B_k}^\top \theta_{k}) + \F(\hat{x}_{k}) + \F^*(-\theta_{k}) \\
&=& \Gap(\hat{x}_{k}, \theta_{k}),
\end{eqnarray}
where the first equality comes from $
\Omega(u) = \langle u, A_{\B_k}^\top \theta_{k} \rangle - \Omega^D(A_{\B_k}^\top \theta_{k})$, the third equality comes from $\F(\hat{x}_{k}) = \langle \nabla \F(\hat{x}_{k}), A_{\B_k}\hat{x}_{k} \rangle - \F^*(-\theta_{k})$.

Therefore, for any $\hat{x}_{k}$ and $u$, we have:
\begin{eqnarray}
&& \PP(\hat{x}_{k}) - \PP(x^*) \\
&\geq& \PP(\hat{x}_{k}) - \PP(\hat{x}_{k} + s(u - \hat{x}_{k})) \\
&=& [\Omega(\hat{x}_{k}) - \Omega(\hat{x}_{k} + s (u - \hat{x}_{k}))]  + [\F(\hat{x}_{k}) - \F(\hat{x}_{k}  + s(u - \hat{x}_{k}))] \\
&\geq& s [\Omega(\hat{x}_{k}) - \Omega(u) - \langle \nabla \F(\hat{x}_{k}), A_{\B_k}(u - \hat{x}_{k}) \rangle]  - \frac{s^2}{2} T \|A_{\B_k}(u-\hat{x}_{k})\|^2 \\
&= & s \Gap(\hat{x}_{k}, \theta_{k}) - \frac{s^2}{2} T \|A_{\B_k}(u-\hat{x}_{k})\|^2,
\end{eqnarray}
where the second equality comes from (\ref{eq:75}) and (\ref{eq:76}), which completes the proof.
\end{proof}

\begin{remark}
The difficulty to analyze the model identification rate of ADSGD is that the screening is conducted on the duality gap while the convergence of the algorithm is analyzed based on the sub-optimality gap. Note the sub-optimality gap at the $k$-th iteration of the main loop is computed as $\PP_k(\hat{x}_{k}) - \PP(x^*)$, Lemma \ref{lemma:boundgap} links the sub-optimality gap and the duality gap at the $k$-th iteration.
\end{remark}

\begin{theorem} \label{theorem:screenability}
Define $\Delta_j \triangleq \frac {n\lambda - \Omega^D_j(A^\top_{j} \theta^{*})}{ 2 \Omega^D_j(A_j)}$, denote $\sigma_{A}^{2}$ as the spectral norm of $A$, suppose $\Omega$ has a bounded support within a ball of radius $M$, given any $\gamma \in (0,1)$, any block that $j \notin \B^{*}$ are correctly identified by ADSGD at iteration $\log_{\frac{1}{\rho}}(1/\epsilon_j)$  with at least probability $1-\gamma$ where $\rho$ is from Theorem \ref{theorem:converge} and $\epsilon_j = \frac{1}{32}\frac{\Delta_j^4 \gamma}{T^3 \sigma_{A}^{2} M^2 (\PP(\hat{x}) - \PP(x^*))}$.
\end{theorem}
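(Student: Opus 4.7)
\medskip

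\noindent\textbf{Proof plan for Theorem \ref{theorem:screenability}.}

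The plan is to trace backwards from the screening rule (\ref{eq:screen_Ak}) to a sufficient condition on the duality gap, then convert that into a sufficient condition on the primal sub-optimality gap $\PP_k(\hat{x}_{k})-\PP(x^{*})$, and finally apply Markov's inequality together with the linear convergence result of Theorem \ref{theorem:converge}. First, I would note that by construction $\theta^{*}\in\mathcal{B}(\theta_{k-1},r^{k-1})$ (this is what makes the screening safe, Property \ref{property:safety}), so by the triangle-type inequality for dual norms,
\begin{eqnarray*}
\Omega^{D}_{j}(A_{j}^{\top}\theta_{k-1}) \;\leq\; \Omega^{D}_{j}(A_{j}^{\top}\theta^{*}) + \Omega^{D}_{j}(A_{j})\,\|\theta_{k-1}-\theta^{*}\| \;\leq\; \Omega^{D}_{j}(A_{j}^{\top}\theta^{*}) + \Omega^{D}_{j}(A_{j})\,r^{k-1}.
\end{eqnarray*}
Plugging this into the left-hand side of (\ref{eq:screen_Ak}), the screening fires for block $j$ provided $r^{k-1} < \Delta_{j}$, which by the definition $r^{k-1}=\sqrt{2T\,\Gap(\tilde{x}_{k-1},\theta_{k-1})}$ is equivalent to $\Gap(\tilde{x}_{k-1},\theta_{k-1}) < \Delta_{j}^{2}/(2T)$.

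Next, I would use Lemma \ref{lemma:boundgap} to convert this duality-gap condition into a sub-optimality-gap condition. Rearranging the inequality in Lemma \ref{lemma:boundgap} and minimizing the resulting upper bound $P/s + (Ts/2)\|A_{\B_{k}}(u-\hat{x}_{k})\|^{2}$ over $s$ via AM--GM (checking that the minimizer lies in $[0,1]$ once the sub-optimality gap $P:=\PP_{k}(\hat{x}_{k})-\PP(x^{*})$ is small enough), one gets
\begin{eqnarray*}
\Gap(\hat{x}_{k},\theta_{k}) \;\leq\; \sqrt{2T\,\|A_{\B_{k}}(u-\hat{x}_{k})\|^{2}\,(\PP_{k}(\hat{x}_{k})-\PP(x^{*}))}.
\end{eqnarray*}
Since $\Omega$ has bounded support of radius $M$, both $u$ and $\hat{x}_{k}$ have norm at most $M$, so $\|A_{\B_{k}}(u-\hat{x}_{k})\|^{2}\leq 4M^{2}\sigma_{A}^{2}$. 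Combining, $\Gap^{2}\leq 8M^{2}\sigma_{A}^{2}T\,P$, so a sufficient condition for the screening to fire at iteration $k$ is
\begin{eqnarray*}
\PP_{k}(\hat{x}_{k})-\PP(x^{*}) \;<\; \frac{\Delta_{j}^{4}}{32\,T^{3}\sigma_{A}^{2}M^{2}}.
\end{eqnarray*}

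Finally, I would make this high probability via Markov's inequality. By Theorem \ref{theorem:converge}, $\E[\PP_{k}(\hat{x}_{k})-\PP(x^{*})]\leq \rho^{k}\,(\PP(\hat{x})-\PP(x^{*}))$, so for any $a>0$, $\Prob(\PP_{k}(\hat{x}_{k})-\PP(x^{*})\geq a)\leq \rho^{k}\,(\PP(\hat{x})-\PP(x^{*}))/a$. Setting $a=\Delta_{j}^{4}/(32\,T^{3}\sigma_{A}^{2}M^{2})$ and forcing the right-hand side to be at most $\gamma$ yields $\rho^{k}\leq \epsilon_{j}$, i.e., $k\geq \log_{1/\rho}(1/\epsilon_{j})$, which is precisely the claimed iteration count.

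The main obstacle I anticipate is the conversion step through Lemma \ref{lemma:boundgap}: one must verify that the optimal $s$ indeed sits in $[0,1]$ (otherwise one keeps $s=1$ and obtains a weaker, linear-in-$P$ instead of $\sqrt{P}$, relation), and must correctly extract the uniform bound $\|u-\hat{x}_{k}\|\leq 2M$ from the bounded-support assumption on $\Omega$ (this is what ties the ``geometric'' constant $M^{2}\sigma_{A}^{2}$ into the final $\epsilon_{j}$). Everything else—the triangle-inequality safe-sphere argument, the Markov step, and plugging in the geometric decay rate $\rho$—is essentially bookkeeping once these two points are settled.
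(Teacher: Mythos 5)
Your plan follows essentially the same route as the paper's own proof: translate the screening rule into the condition $r^k < \Delta_j$, hence $\Gap(\hat{x}_k,\theta_k) < \Delta_j^2/(2T)$, convert this to the sub-optimality condition $\PP_k(\hat{x}_k)-\PP(x^*) \le \Delta_j^4/(32T^3\sigma_A^2M^2)$ via Lemma \ref{lemma:boundgap} and the bound $\|A_{\B_k}(u-\hat{x}_k)\|\le 2\sigma_A M$, and finish with Markov's inequality applied to the linear rate of Theorem \ref{theorem:converge}. The one point you flag that the paper glosses over --- verifying that the minimizing $s$ lies in $[0,1]$ when optimizing the bound from Lemma \ref{lemma:boundgap} --- is a fair caveat, but it does not change the argument or the conclusion.
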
 

\begin{proof}
Based on the eliminating condition, any variable block $j$ can be identified at the $k$-th iteration  when 
\begin{eqnarray} 
 \frac{1}{n} \Omega^D_j(A^\top_{j} \theta^{*}) \leq \max\limits_{\theta \in \mathcal{R}} \frac{1}{n} \Omega^D_j(A_j^\top \theta) \leq  \frac{1}{n} \Omega^D_j(A^\top_{j} \theta^{*}) + \frac{2}{n} \Omega^D_j(A_j)  r^k < \lambda.
\end{eqnarray}
Thus, we have that variable block $j$ can be identified  when
$
r^k <\frac {\lambda -  \frac{1}{n}\Omega^D_j(A^\top_{j} \theta^{*})}{ \frac{2}{n} \Omega^D_j(A_j)}.
$
Denote $\Delta_j \triangleq \frac {\lambda -  \frac{1}{n}\Omega^D_j(A^\top_{j} \theta^{*})}{ \frac{2}{n} \Omega^D_j(A_j)}$, considering $\tilde{x}_{k}=\hat{x}_{k}$ and $r^k = \sqrt{2T \Gap(\tilde{x}_{k}, \theta_{k})}$, we have that any variable block $j$ can be identified at the $k$-th iteration when
\begin{eqnarray} 
 \Gap(\hat{x}_{k}, \theta_{k}) < \frac{\Delta_j^2}{2T}.
\label{eq:82}
\end{eqnarray}

For $u_k \in \partial\Omega^D(A_{\B_k}^\top \theta_{k})$, considering $\Omega$ has a bounded support within a ball of radius $M$, we have
\begin{eqnarray}
\|A(u_k - \hat{x}_{k})\| \leq 2 \sigma_{A} M.
\end{eqnarray}
Based on Lemma \ref{lemma:boundgap}, we have
\begin{eqnarray}
\Gap(\hat{x}_{k}, \theta_{k}) &\leq \frac{1}{s} (\PP_k(\hat{x}_{k}) - \PP(x^*)) + 2T \sigma_{A}^{2} M^2 s.
\end{eqnarray}
Minimizing the right term over $s$, we have
\begin{eqnarray}
\Gap(\hat{x}_{k}, \theta_{k}) \leq \sqrt{8 T \sigma_{A}^{2} M^2 (\PP_k(\hat{x}_{k}) - \PP(x^*))}.
\end{eqnarray}
Thus, we can make
\begin{eqnarray}
\sqrt{8 T \sigma_{A}^{2} M^2 (\PP_k(\hat{x}_{k}) - \PP(x^*))} \leq \frac{\Delta_j^2}{2T}.
\label{eq:91}
\end{eqnarray}
to ensure the eliminating condition (\ref{eq:82}) holds. Since (\ref{eq:91}) can be reformulated as 
\begin{eqnarray}
 \PP_k(\hat{x}_{k}) - \PP(x^*) \leq \frac{1}{32}\frac{\Delta_j^4}{T^3 \sigma_{A}^{2} M^2},
\label{eq:95}
\end{eqnarray}
if $\exists k \in \mathbb{N}^{+}$, we have (\ref{eq:95}) hold, we can ensure the eliminating condition hold at the $k$-iteration.   

From Theorem \ref{theorem:converge}, we have
\begin{eqnarray} 
\E \PP_k(\hat{x}_{k}) - \PP(x^*) \leq
\rho^k [\PP(\hat{x}) - \PP(x^*)].
\label{eq:92}
\end{eqnarray}
Thus, if we let
$
 (\PP(\hat{x}) - \PP(x^*))\rho^k \leq \frac{1}{32}\frac{\Delta_j^4}{T^3 \sigma_{A}^{2} M^2},
$
we have 
\begin{eqnarray} 
\E \PP_k(\hat{x}_{k}) - \PP(x^*) \leq
\frac{1}{32}\frac{\Delta_j^4}{T^3 \sigma_{A}^{2} M^2}.
\end{eqnarray}
By Markov inequality, we have  
\begin{eqnarray} 
 \Prob\left(\PP_k(\hat{x}_{k}) - \PP(x^*)\geq \frac{1}{32}\frac{\Delta_j^4}{T^3 \sigma_{A}^{2} M^2} \right)   &\leq& \frac{32 T^3 \sigma_{A}^{2} M^2}{\Delta_j^4}
(\E \PP_k(\hat{x}_{k}) - \PP(x^*))  \\  &\leq&
\frac{32 T^3 \sigma_{A}^{2} M^2}{\Delta_j^4} \rho^k (\PP(\hat{x}_{k}) - \PP(x^*)). 
\end{eqnarray}
Denote $\epsilon_j = \frac{1}{32}\frac{  \Delta_j^4 \gamma}{T^3 \sigma_{A}^{2} M^2 (\PP(\hat{x}) - \PP(x^*))}$,  if we choose
$ k \geq \log_{\frac{1}{\rho}}(\frac{1}{\epsilon_j}) $, we have
\begin{eqnarray} 
 \Prob\left(\PP_k(\hat{x}_{k}) - \PP(x^*)\geq \frac{1}{32}\frac{\Delta_j^4}{T^3 \sigma_{A}^{2} M^2} \right)     &\leq&
\frac{32 T^3 \sigma_{A}^{2} M^2}{\Delta_j^4} \epsilon_j (\PP(\hat{x}_{k}) - \PP(x^*)) 
\\  &=& \gamma.
\end{eqnarray}
Thus, for $ k \geq \log_{\frac{1}{\rho}}(\frac{1}{\epsilon_j}) $, we have
\begin{eqnarray} 
\PP_k(\hat{x}_{k}) - \PP(x^*) \leq
\frac{1}{32}\frac{\Delta_j^4}{T^3 \sigma_{A}^{2} M^2},
\end{eqnarray}
with at least probability $1-\gamma$, which means any variable that $j \notin \B^{*}$ are correctly detected and successfully eliminated by the ADSGD algorithm at iteration $\log_{\frac{1}{\rho}}(\frac{1}{\epsilon_j})$  with at least probability $1-\gamma$, which completes the proof.
\end{proof}

\begin{remark}
Theorem \ref{theorem:screenability} shows that the equicorrelation set $\B^{*}$ can be identified by  ADSGD at a linear rate $O(\log_{\frac{1}{\rho}}(1/\epsilon_j))$ with at least probability $1-\gamma$. We use the Lipschitzing trick in \cite{dunner2016primal} to restrict the function $\Omega$ within a bounded support.
\end{remark}

\subsubsection{Overall Complexity}
\begin{corollary} \label{corollary:complexity1}
Suppose the size of the active features in set $\B_k$ is $d_{k}$ and $d^*$ is the size of the active features in $\B^*$, given any $\gamma \in (0,1)$, let $K_m = O(\log_{\frac{1}{\rho}}(1/\epsilon_j))$, we have $d_k$ is decreasing and $d_{K_m} $ equals to $ d^*$ with at least probability $1-\gamma$.  Define $s = \frac{1}{K_c}\sum_{k=1}^{K_c} d_k $ where  $K_c=O(\log_{\frac{1}{\rho}}(1/\epsilon))$, 
the overall complexity of ADSGD is $O((n+T/\mu)s \log(1/\epsilon))$.
\end{corollary}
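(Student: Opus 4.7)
The plan is to combine Theorem \ref{theorem:converge} (linear convergence) with Theorem \ref{theorem:screenability} (linear identification) and then account for the per-iteration work. The argument breaks into three steps: monotonicity and eventual identification of the active set, bounding the number of outer iterations needed for $\epsilon$-accuracy, and summing the per-iteration cost using $d_k$ rather than $d$.

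First, I would observe that step 7 of Algorithm \ref{algorithm:ADSGD} can only shrink the active set, so $\B_k \subseteq \B_{k-1}$ and hence $d_k$ is non-increasing in $k$. Property \ref{property:safety} (safety of the screening rule) further gives $d_k \geq d^*$ for all $k$. Applying Theorem \ref{theorem:screenability} uniformly to each block $j \notin \B^{*}$ yields that after $K_m = O(\log_{1/\rho}(1/\epsilon_j))$ outer iterations every inactive block has been discarded with probability at least $1-\gamma$, which gives $d_{K_m} = d^{*}$ on that event.

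Second, by Theorem \ref{theorem:converge} the expected suboptimality contracts as $\rho^{k}[\PP(\hat{x}) - \PP(x^{*})]$, so $K_c = O(\log_{1/\rho}(1/\epsilon))$ outer iterations suffice to reach an $\epsilon$-accurate solution. Third, I would account for the cost of the $k$-th outer iteration. Because step 8 restricts the design matrix to $\B_k$, the full gradient $\tilde{\mu}_{k-1}$ in step 3 costs $O(n d_k)$. Each of the $m_k = m q_k / q$ inner iterations evaluates a partial gradient on a mini-batch of size $|\I|$ on one block whose average size is $d_k/q_k$, contributing $O(|\I| \cdot d_k/q_k)$ work. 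With the parameter choices $|\I| = T/L$ and $m = 65 q L/\mu$ from Lemma \ref{lemma:objdecrease1}, the total inner-loop cost becomes $O(m_k |\I| d_k/q_k) = O((T/\mu) d_k)$, so the per-iteration cost is $O(d_k(n + T/\mu))$. Summing over $k = 1,\ldots,K_c$ and using the definition $s = \frac{1}{K_c}\sum_{k=1}^{K_c} d_k$ gives total complexity $O((n + T/\mu)\, K_c s) = O((n + T/\mu)\, s \log(1/\epsilon))$.

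The main obstacle I expect is the bookkeeping of the per-iteration cost: one has to justify that all inner updates operate on $\B_k$ only (so the work scales with $d_k$, not the original $d$) and that the parameter choices driving $\rho < 2/3$ in Lemma \ref{lemma:objdecrease1} plug through cleanly to give the $T/\mu$ factor. One should also be careful to couple the identification event from Theorem \ref{theorem:screenability} (which holds with probability $1-\gamma$) with the deterministic summation of per-iteration costs, so that the final complexity bound inherits the correct probabilistic qualifier.
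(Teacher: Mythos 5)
Your proposal is correct and follows essentially the same route as the paper's own proof: the identification claim is read off from Theorem \ref{theorem:screenability}, the iteration count from Theorem \ref{theorem:converge}, and the per-iteration cost is decomposed into the $O(nd_k)$ full-gradient step plus $m_k\,|\I|\,d_k/q_k = O((T/\mu)d_k)$ inner-loop work, summed via the definition of $s$. Your added remarks on the monotonicity of $d_k$ and the probabilistic qualifier are harmless elaborations of what the paper leaves implicit.
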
 

\begin{proof}
The first part of Corollary \ref{corollary:complexity1} is the direct result of  Theorem \ref{theorem:screenability}. For the second part, Theorem \ref{theorem:converge}
shows that the ADSGD method converges to the optimal with the convergence rate $O(\log\frac{1}{\epsilon})$. For each main loop, the algorithm runs $m_k$ inner loops. Thus, since $m = 65q L/\mu$ and $m_k = m q_k/q$, the ADSGD algorithm takes $O((1+\frac{q_k L}{\mu})\log\frac{1}{\epsilon})$ iterations to achieve $\epsilon$ error.

For the computational complexity, considering the $k$-th iteration of the main loop, the algorithm is solving the sub-problem $\PP_k$ and the complexity of the outer loop is $O(n d_k)$. Within the inner loop, the complexity of each iteration is $d_k|\I|/q_k$. Thus, the complexity of the inner loop is $m_k d_k|\I|/q_k$. Let $|\I|=T/L$, define $s = \frac{1}{K_c}\sum_{k=1}^{K_c} d_k $ where  $K_c=O(\log_{\frac{1}{\rho}}(\frac{1}{\epsilon}))$, the overall complexity for the ADSGD algorithm is $O((n+\frac{T}{\mu})s \log\frac{1}{\epsilon})$, which completes the proof.
\end{proof}

\begin{remark}
Please note the difference between the number of features as $d_k$ and the number of blocks as $q_k$ at the $k$-th iteration. In the high-dimensional setting, we have $d^*\ll d$ and $s\ll d$. Thus, Corollary \ref{corollary:complexity1} shows that ADSGD can simultaneously achieve linear convergence rate and low per-iteration cost, which improves ProxSVRG and MRBCD with the overall complexity $O((n+T/\mu)d \log(1/\epsilon))$ at a large extent in practice. 
\end{remark}

\subsection{Nonstrongly Convex Functions}
For nonstrongly convex $\F$, we can use a perturbation approach for analysis. Suppose $x^0$ is the initial input and $\mu_p$ is a positive parameter, adding a perturbation term $\mu_p \|x-x^0\|^2$ to Problem (\ref{eq:general}), we have:
\begin{eqnarray}  \label{eq:general2}
\min\limits_{x \in \mathbb{R}^{d} }   \F(x) +\mu_p \|x-x^0\|^2 + \Omega(x).
\end{eqnarray}
If we solve (\ref{eq:general2}) with ADSGD, since we can treat $\F(x) +\mu_p \|x-x^0\|^2$ as the data-fitting loss, we know the loss is $\mu_p$-strongly convex, we can obtain the convergence result for  (16) as $O((n+T/\mu_p)s \log(1/\epsilon))$. Suppose $\hat{x}_{k}$ be generated from the $k$-th iteration of the main loop in Algorithm \ref{algorithm:ADSGD} where $k=O((n+T/\mu_p)s \log\frac{2}{\epsilon})$ and $\vec{x}^*$ is the optimum solution of  (\ref{eq:general2}), let $|\I| \geq T/L$ and $\eta<\frac{1}{4 L}$, we obtain: 
\begin{eqnarray} 
\E \PP_k(\hat{x}_{k}) + C_p - \PP(\vec{x}^*) -\mu_p \|\vec{x}^*-x^0\|^2 \leq \epsilon/2,
\end{eqnarray}
where $C_p$ is the expectation of the perturbation term, which is always positive. Thus, we have
\begin{eqnarray} 
\E \PP_k(\hat{x}_{k})  &\leq& \epsilon/2 +  \PP(\vec{x}^*) + \mu_p \|\vec{x}^*-x^0\|^2 
\\
&\leq& \epsilon/2 +  \PP(x^*) + \mu_p \|x^*-x^0\|^2 
\end{eqnarray}
where the second inequality is obtained because $\vec{x}^*$ is the optimum solution of (\ref{eq:general2}).
If we set $\mu_p = \frac{\epsilon}{2\|x^*-x^0\|^2}$, we have $\E \PP_k(\hat{x}_{k}) - \PP(x^*) \leq \epsilon$. Since $2\|x^*-x^0\|^2$ is a constant, the overall complexity of Algorithm 1 for nonstrongly convex function is $k=O((n+T/\epsilon)s \log(1/\epsilon))$, which also improves ProxSVRG and MRBCD with the overall complexity $O((n+T/\epsilon)d \log(1/\epsilon))$ for nonstrongly convex function.

\section{Experiments}

\subsection{Experimental Setup}
\label{section:experimentalsetup}
\noindent\textbf{Design of Experiments:}
We perform extensive experiments on real-world  datasets for two popular sparsity regularized models Lasso and sparse logistic regression shown in (\ref{eq:lasso}) and (\ref{eq:logistic}) respectively to demonstrate the superiority of our ADSGD \wrt the efficiency. 
\begin{eqnarray}
\underset{x \in \mathbb{R}^{d}}{\min} \ \frac{1}{n} \sum_{i=1}^{n} \frac{1}{2}(y_{i}-x_{i}^\top x)^{2}+\lambda\|x\|_{1},
\label{eq:lasso}
\end{eqnarray}
\begin{eqnarray}
 \underset{x \in \mathbb{R}^{d}}{\min} \ \frac{1}{n} \sum_{i=1}^{n} (-y_{i}a_i^\top x + \log(1+\exp(a_i^\top x)))  +\lambda\|x\|_{1}.
\label{eq:logistic}
\end{eqnarray}
To validate the efficiency of ADSGD, we compare the convergence results of ADSGD w.r.t the running time with competitive algorithms ProxSVRG  \cite{xiao2014proximal} and MRBCD \cite{zhao2014accelerated,wang2014randomized} under different setups. We do not include the results of ASGD because the naive implementation is very slow.  The batch size of ADSGD is chosen as $10$ and $20$ respectively.

\begin{table}[t]
\centering
\caption{The descriptions of the datasets.}
\setlength{\tabcolsep}{5.6mm}
\begin{tabular}{lcc}
\toprule
Dataset & Samples & Features \\
\midrule

   PlantGO  & 978 & 3091  \\
        
     Protein      & 17766 & 357 \\
     
         Real-sim    & 72309 & 20958   \\

    Gisette & 6000 & 5000      \\

    Mnist    & 60000 & 780    \\
      Rcv1.binary    & 20242 & 47236    \\
  
\bottomrule
\end{tabular}
\label{table:datasets}
\end{table}

\noindent\textbf{Datasets:}
Table \ref{table:datasets} summarizes  the benchmark datasets used in our experiments.  Protein, Real-sim, Gisette, Mnist, and Rcv1.binary datasets are from the LIBSVM repository, which is available at \url{https://www.csie.ntu.edu.tw/~cjlin/libsvmtools/datasets/}. PlantGO is from \cite{xu2016multi}, which is available at \url{http://www.uco.es/kdis/mllresources/}. Note Mnist is the binary version of the original ones by classifying the first half of the classes versus the left ones.

\noindent\textbf{Implementation Details:}
All the algorithms are implemented in MATLAB. We compare the average running CPU time of different algorithms. The experiments are evaluated on a 2.30 GHz machine.  For the convergence results of Lasso and sparse logistic regression, we present the results with $\lambda_1 = \lambda_{\max}/2 $ and $\lambda_2 = \lambda_{\max}/4 $. Notably, $\lambda_{\max}$ is a parameter that, for all $\lambda \geq \lambda_{\max}$, $x^*$  must be $0$. Specifically, we have  $\lambda_{\max}=\frac{1}{n}\|A^\top y\|_\infty$ for Lasso and $\lambda_{\max}=\frac{1}{n}\|A^\top G(0)\|_\infty$ for sparse logistic regression where $G(\theta) \triangleq \frac{e^\theta}{1+e^\theta}-y$. Please note, for each setting, all the compared algorithms share the same hyperparameters for a fair comparison. We set the mini-batch size as 10 for the compared algorithms. The coordinate block number is set as $q = 10$. Other hyperparameters include the initial inner loop number $m$ and step size $\eta$, which are selected to achieve the best performance. For Lasso, we perform the experiments on PlantGO, Protein, and  Real-sim. For sparse logistic regression, we perform the experiments on Gisette, Mnist, and Rcv1.binary.

\begin{figure*}[!th]
  \centering
  	\begin{subfigure}[b]{0.32\textwidth}     
  	\centering
    \includegraphics[width=1.5in]{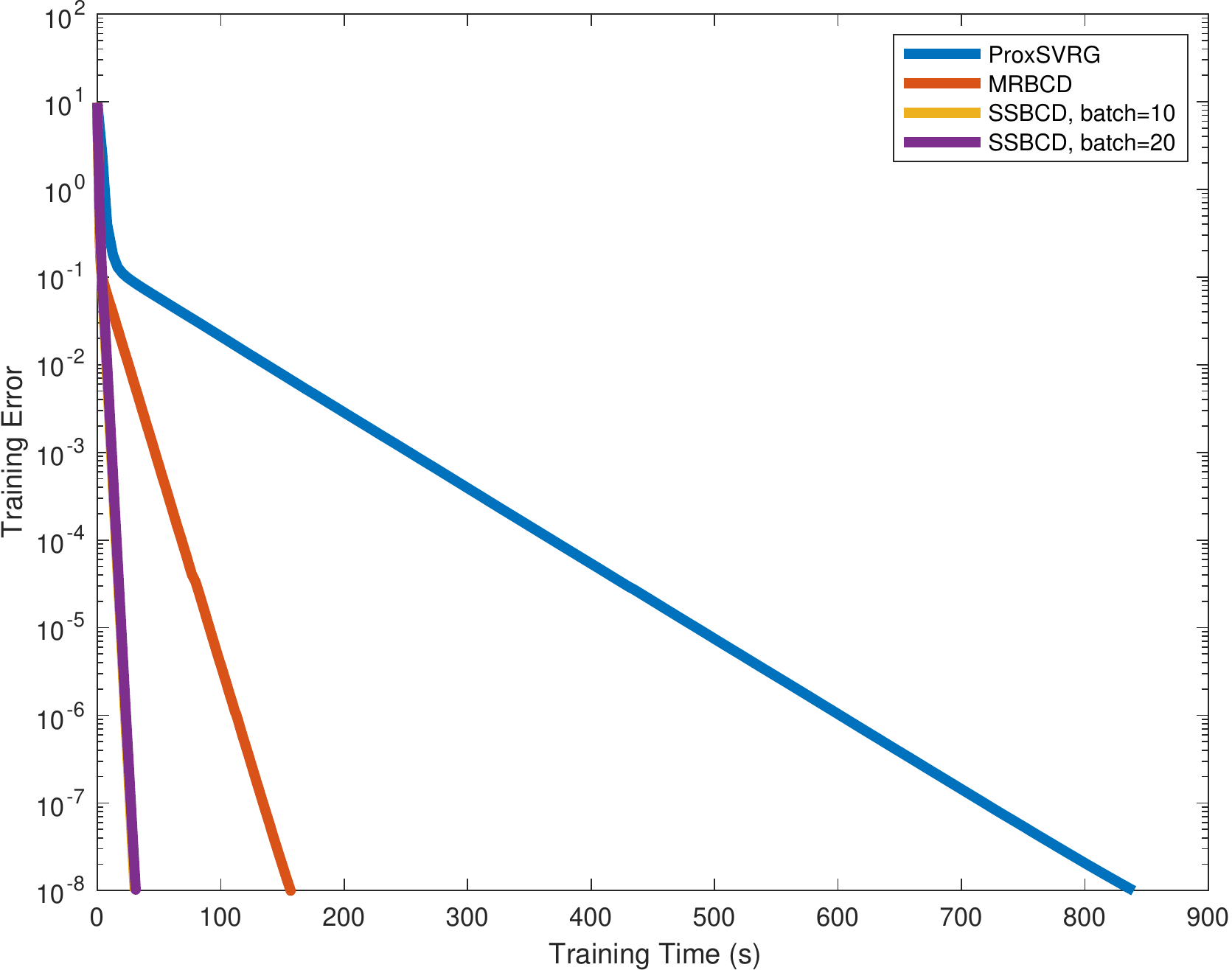}
    \caption{PlantGo}
    \end{subfigure}
    \begin{subfigure}[b]{0.32\textwidth}
    \centering
    \includegraphics[width=1.5in]{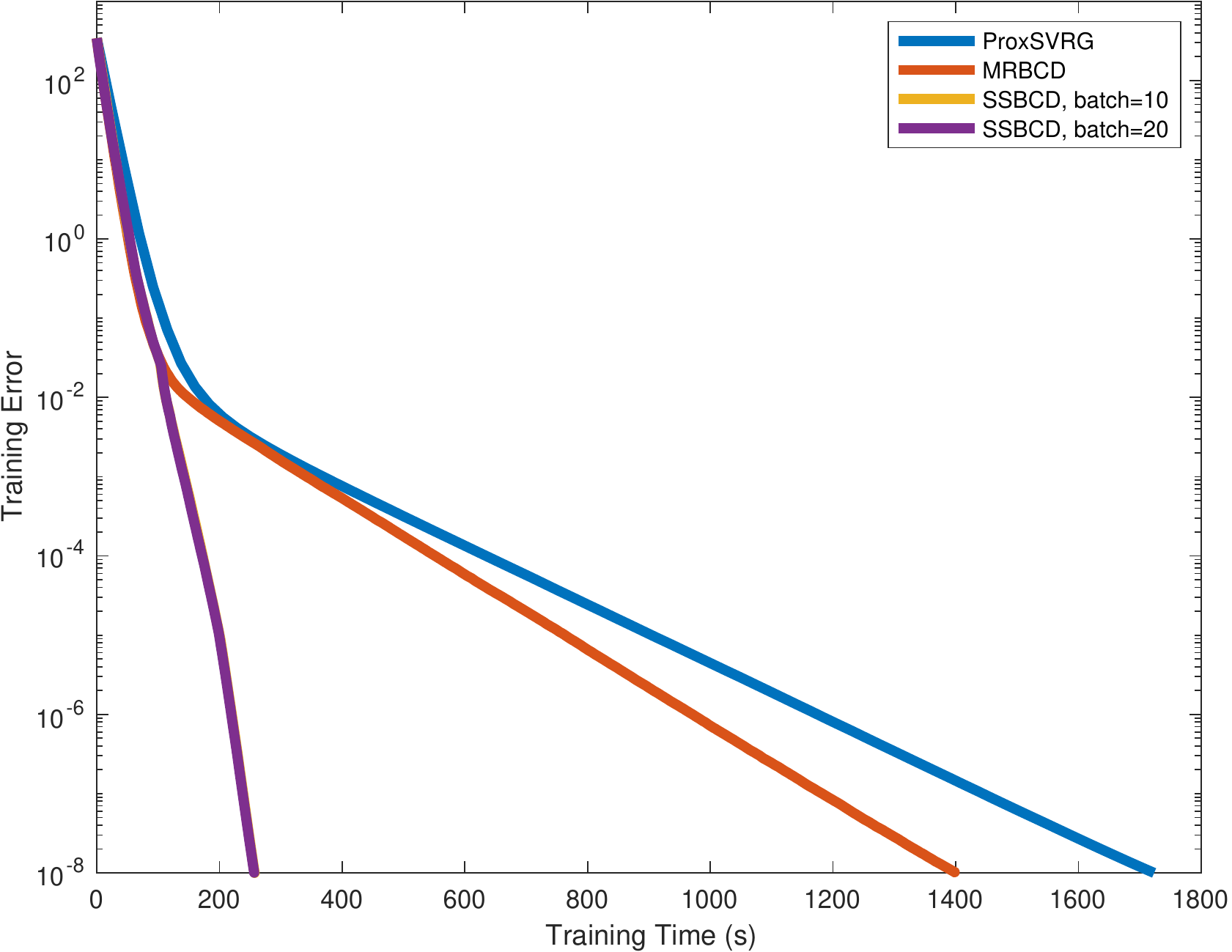}
    \caption{Protein}
    \end{subfigure}
    \begin{subfigure}[b]{0.32\textwidth}
    \centering
    \includegraphics[width=1.5in]{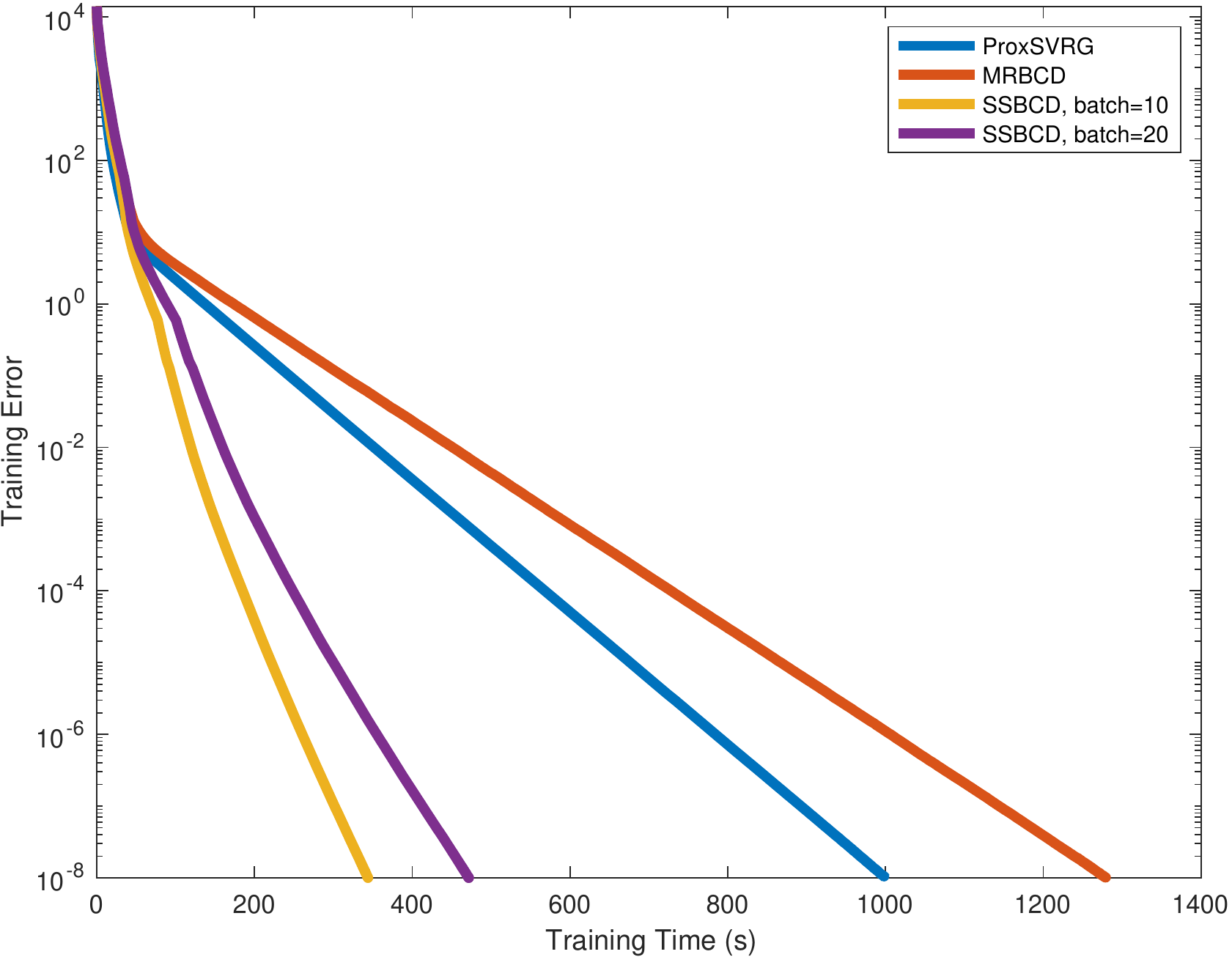}
    \caption{Real-sim}
    \end{subfigure}
  \caption{Convergence results of different algorithms for Lasso on different datasets.}
\label{fig1} 
\end{figure*}

\begin{figure*}[!th]
  \centering
  	\begin{subfigure}[b]{0.32\textwidth}     
  	\centering
    \includegraphics[width=1.5in]{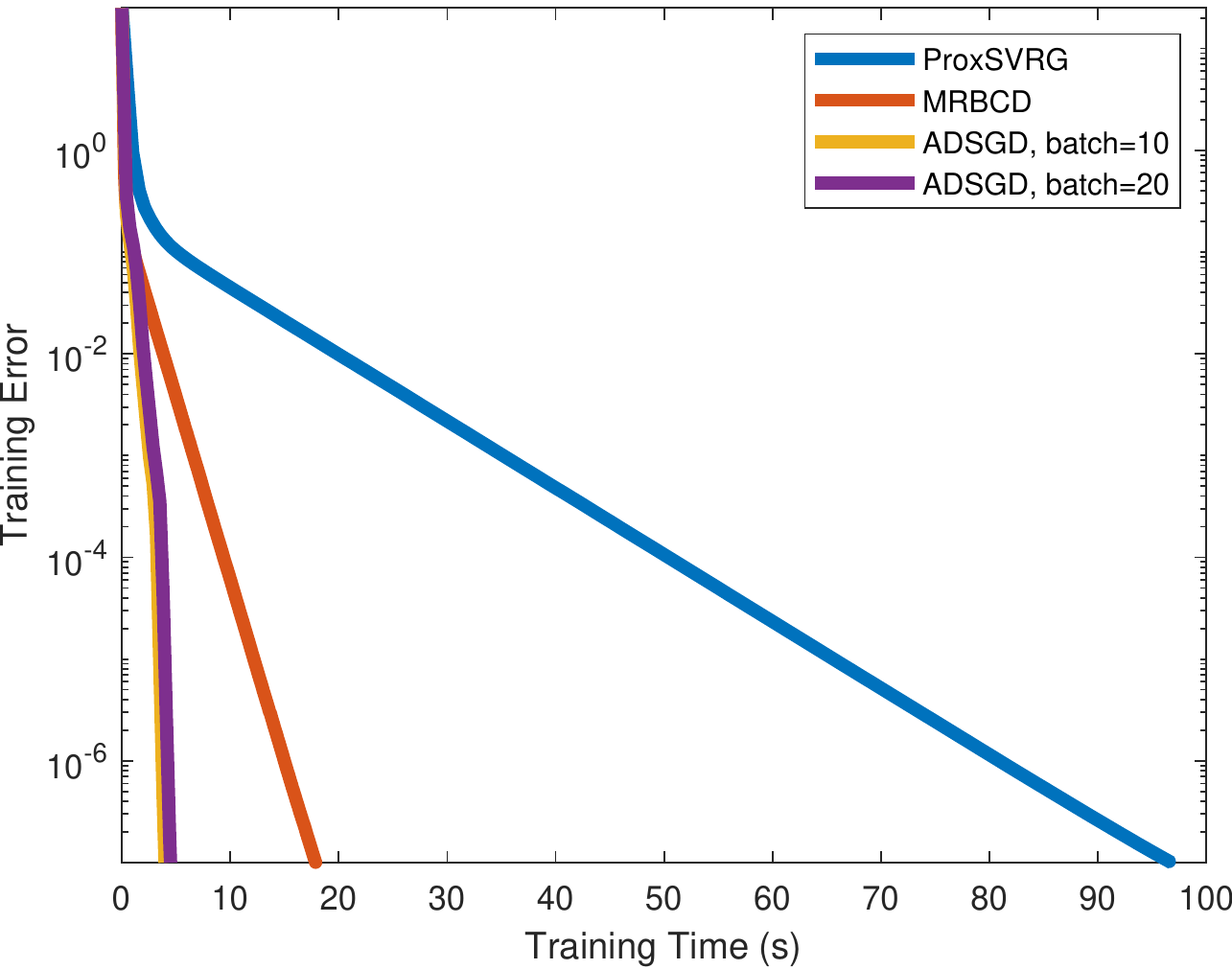}
    \caption{PlantGo}
    \end{subfigure}
    \begin{subfigure}[b]{0.32\textwidth}
    \centering
    \includegraphics[width=1.5in]{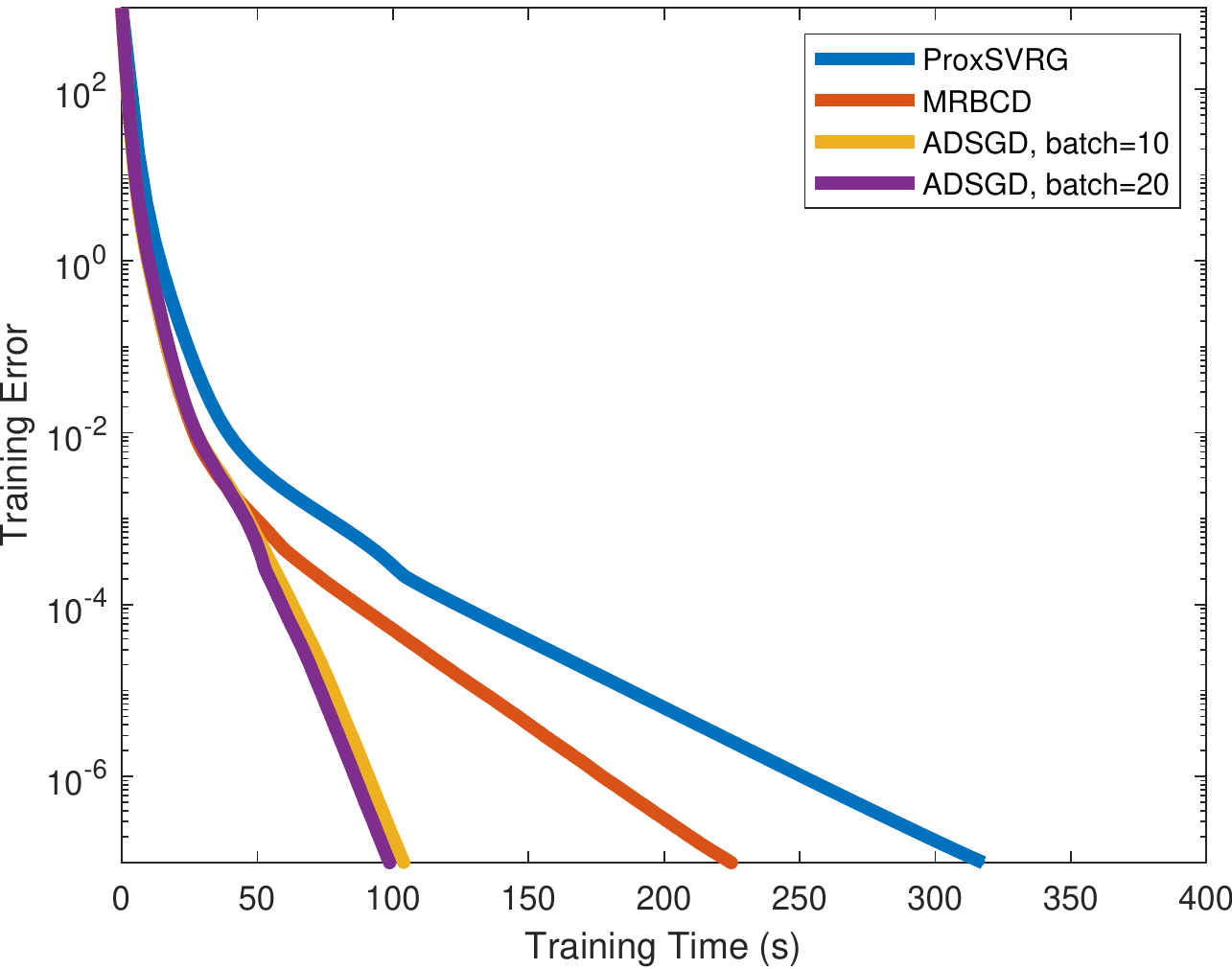}
    \caption{Protein}
    \end{subfigure}
    \begin{subfigure}[b]{0.32\textwidth}
    \centering
    \includegraphics[width=1.5in]{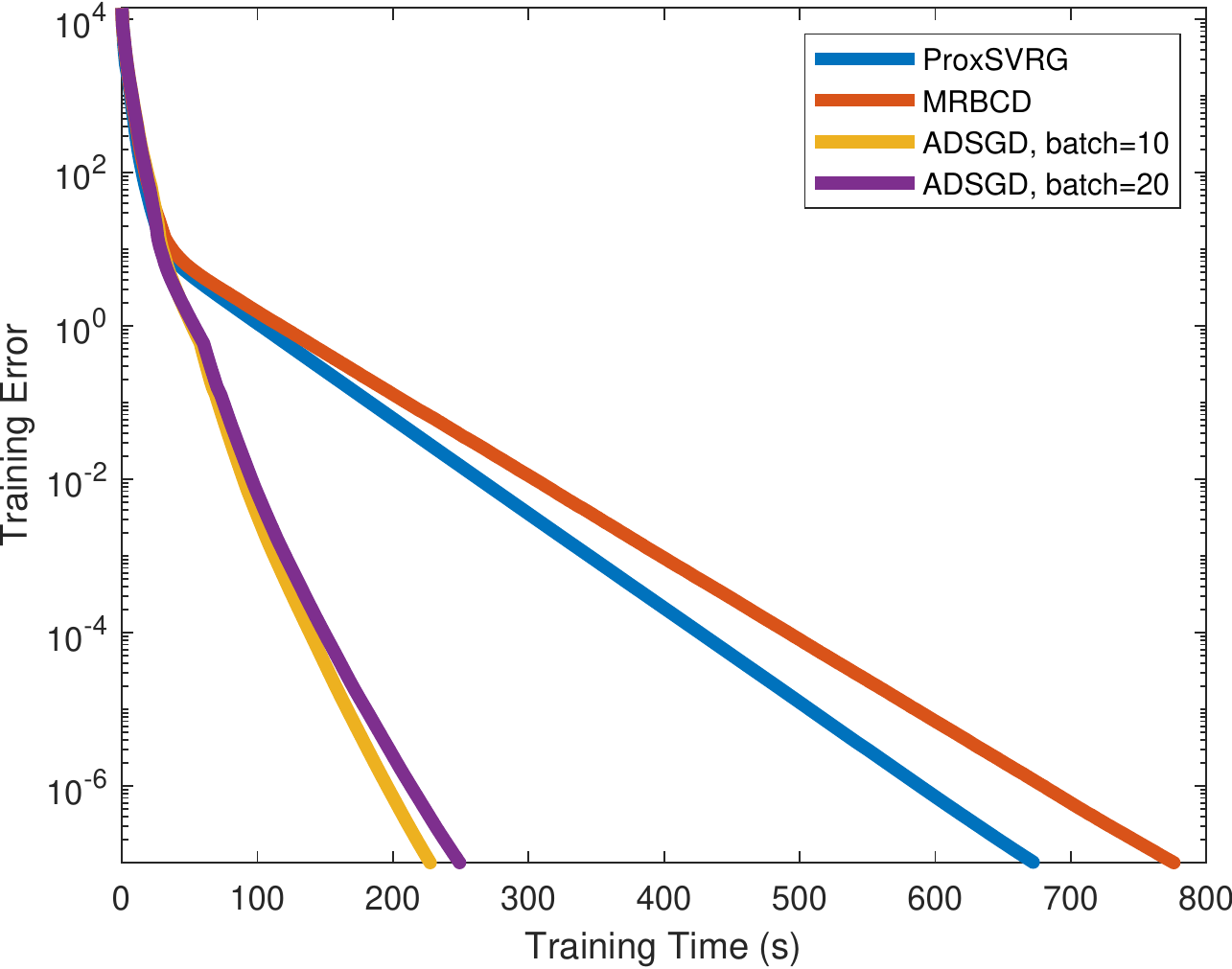}
    \caption{Real-sim}
    \end{subfigure}
  \caption{Convergence results of different algorithms for Lasso on different datasets.}
\label{fig2} 
\end{figure*}

\subsection{Experimental Results}
\paragraph{Lasso Regression}

Figures \ref{fig1}(a)-(c) provide the convergence results for Lasso on three datasets with $\lambda = \lambda_1$. Figures \ref{fig2}(a)-(c) provide the convergence results with $\lambda = \lambda_2$. The results confirm that ADSGD always converges much faster than MRBCD and ProxSVRG under different setups, even when $n  \gg d$ for Protein. This is because, as the variables are discarded, the optimization process is mainly conducted on a sub-problem with a much smaller size and thus requires fewer inner loops.  Meanwhile, the screening step imposes almost no additional costs on the algorithm. Thus,  ADSGD can achieve a lower overall complexity, compared to MRBCD and ProxSVRG conducted on the full model.

\paragraph{Sparse Logistic Regression}

Figures \ref{fig3}(a)-(c) provide the convergence results for sparse logistic regression on three datasets with $\lambda = \lambda_1$. The results also show that  ADSGD spends much less running time than MRBCD and ProxSVRG for all the datasets, even when $n  \gg d$ for Mnist. This is because our method solves the models with a smaller size and the screening step imposes almost no additional costs for the algorithm.

\begin{figure*}[!th]
 \centering
    \begin{subfigure}[b]{0.32\textwidth}
    \centering
    \includegraphics[width=1.5in]{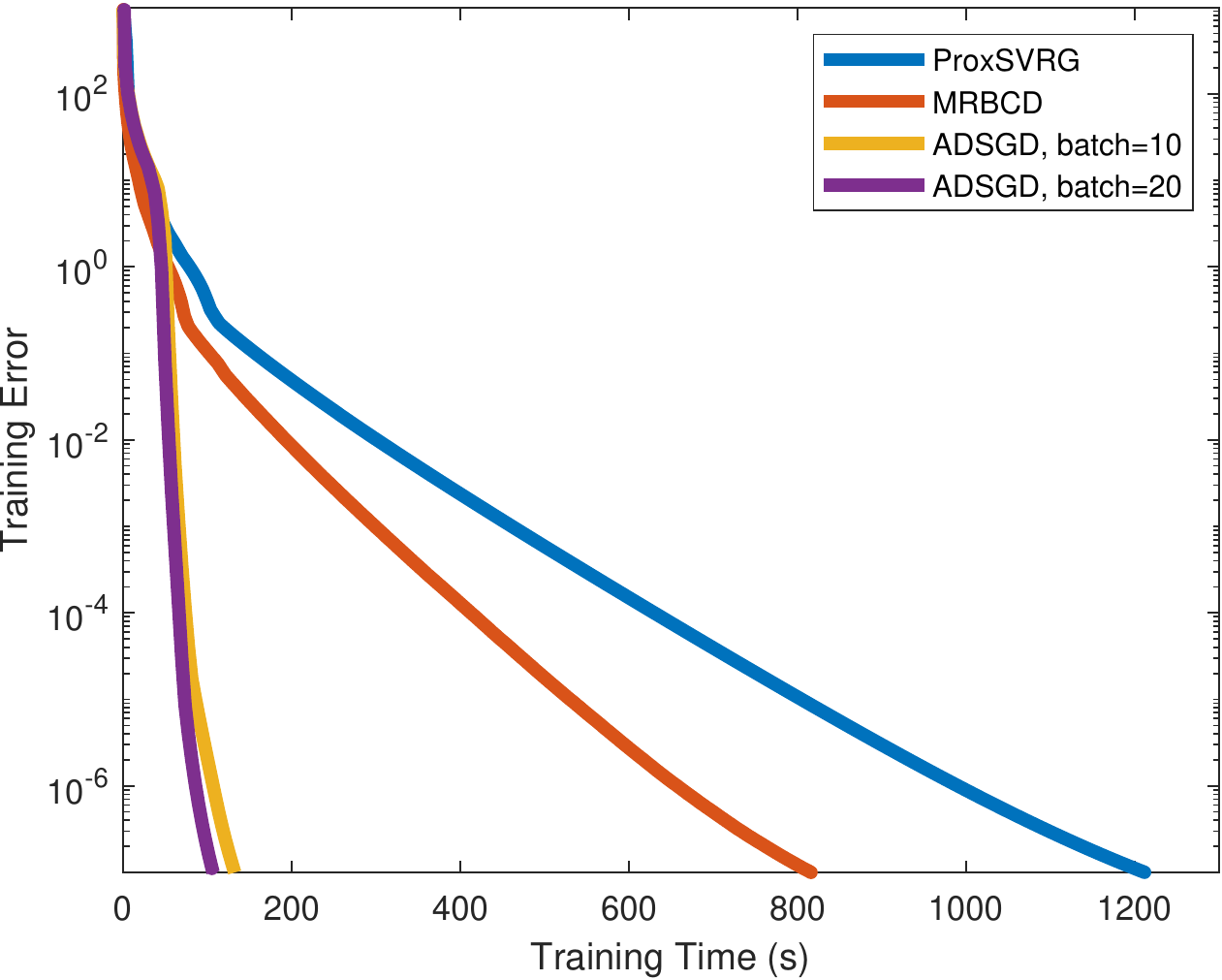}
    \caption{Gisette}
    \end{subfigure}
    \begin{subfigure}[b]{0.32\textwidth}
    \centering
    \includegraphics[width=1.5in]{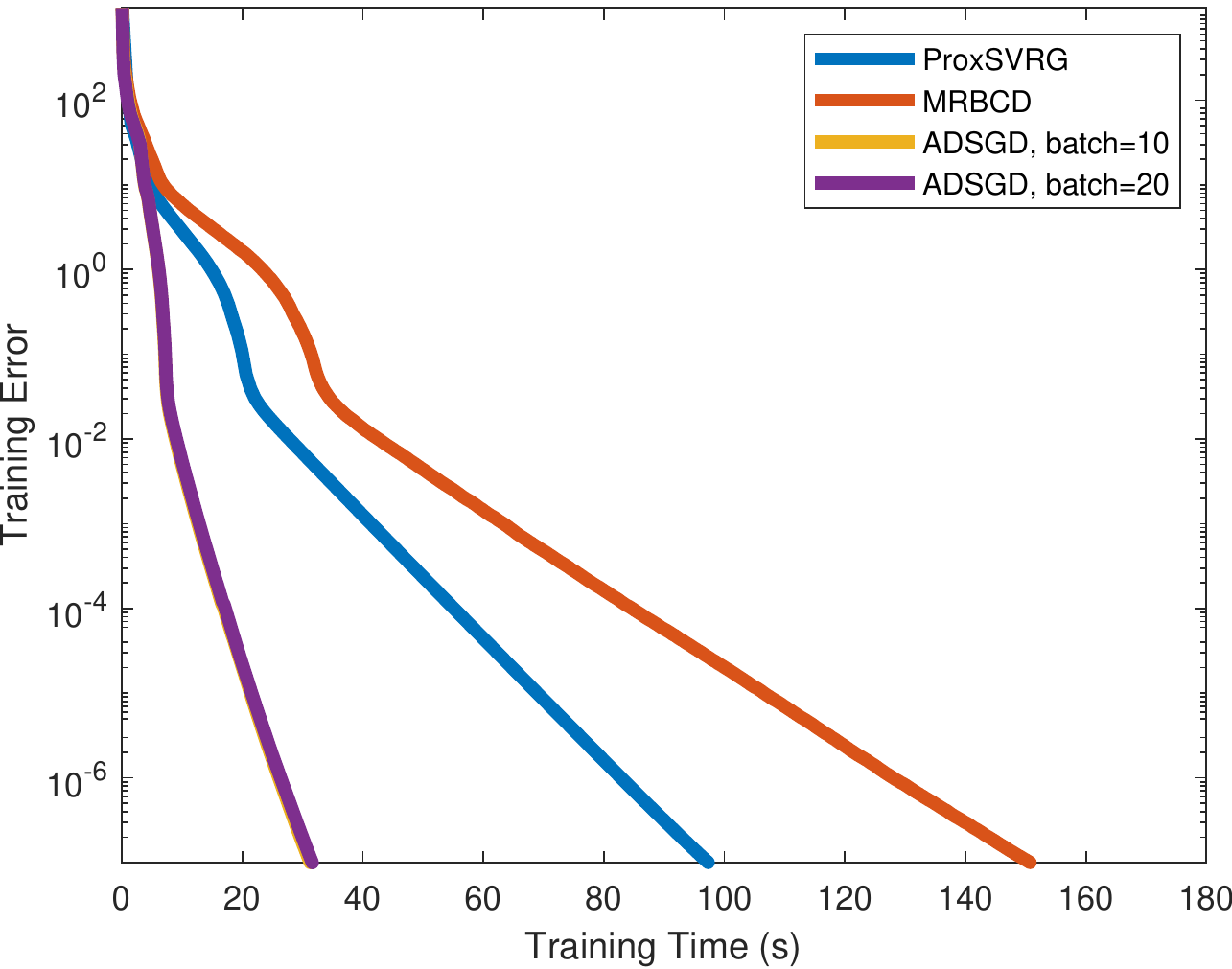}
      \caption{Mnist}
    \end{subfigure}
  	\begin{subfigure}[b]{0.32\textwidth}    
  	\centering
    \includegraphics[width=1.5in]{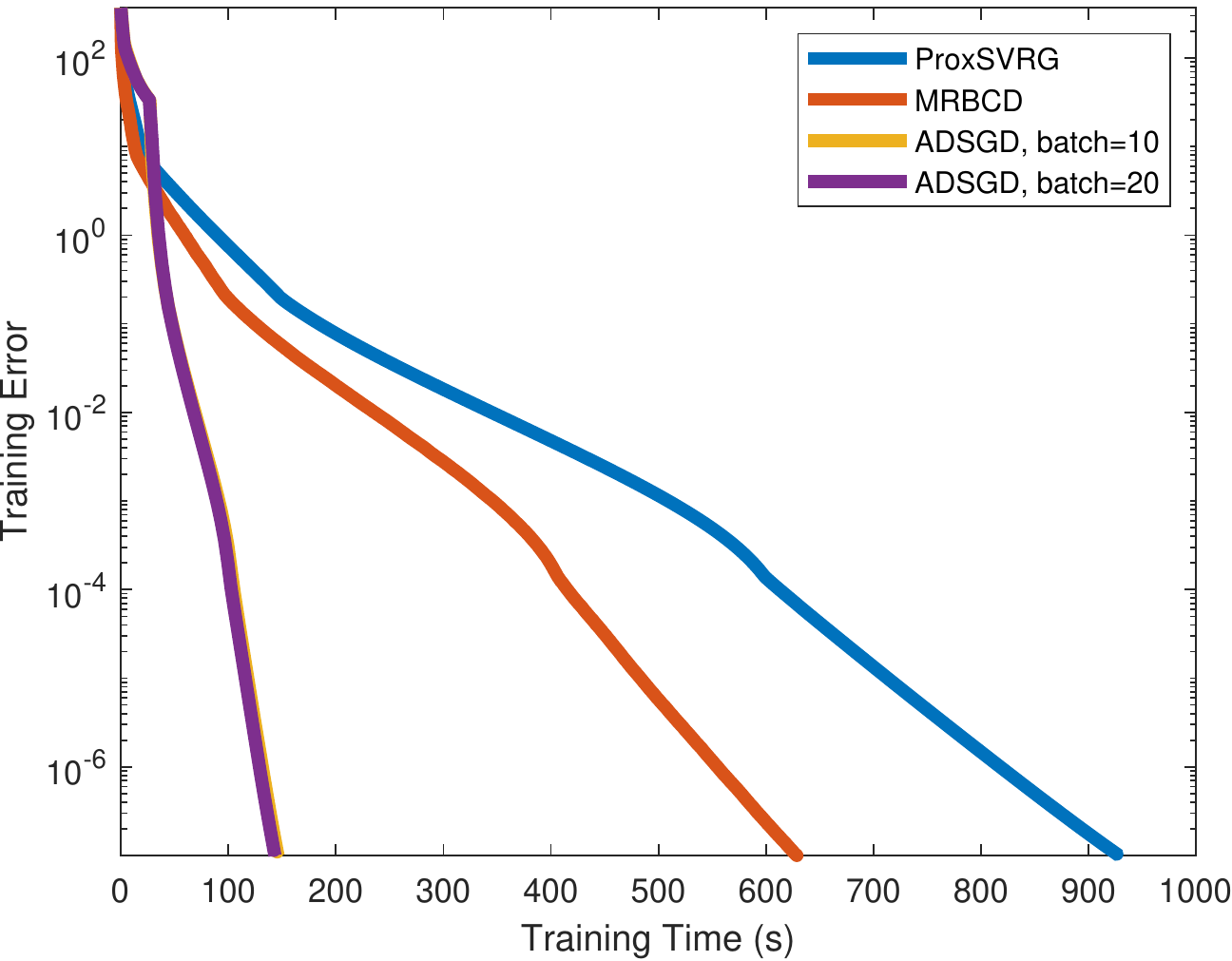}
  \caption{Rcv1.binary}
    \end{subfigure}
  \caption{Convergence results of different methods for sparse logistic regression on different datasets.}
\label{fig3} 
\end{figure*}

\section{Conclusion}
In this paper, we proposed an accelerated doubly stochastic gradient descent method for sparsity regularized minimization problem with linear predictors, which can save much useless computation by constantly identifying the inactive variables without any loss of accuracy. Theoretically, we proved that our ADSGD method can achieve lower overall computational complexity and linear rate of explicit model identification. Extensive experiments on six benchmark datasets for popular regularized models demonstrated the efficiency of our method.

\bibliographystyle{abbrv}
\bibliography{ADSGD}

\begin{thebibliography}{10}

\bibitem{bao2020fast}
R.~Bao, B.~Gu, and H.~Huang.
\newblock Fast oscar and owl regression via safe screening rules.
\newblock In {\em International Conference on Machine Learning}, pages
  653--663. PMLR, 2020.

\bibitem{Bao2022DoublySA}
R.~Bao, X.~Wu, W.~Xian, and H.~Huang.
\newblock Doubly sparse asynchronous learning for stochastic composite
  optimization.
\newblock In {\em IJCAI}, 2022.

\bibitem{bauschke2011convex}
H.~H. Bauschke, P.~L. Combettes, et~al.
\newblock {\em Convex analysis and monotone operator theory in Hilbert spaces},
  volume 408.
\newblock Springer, 2011.

\bibitem{bian2021optimization}
W.~Bian, Y.~Chen, X.~Ye, and Q.~Zhang.
\newblock An optimization-based meta-learning model for mri reconstruction with
  diverse dataset.
\newblock {\em Journal of Imaging}, 7(11):231, 2021.

\bibitem{chen2021learnable}
Y.~Chen, H.~Liu, X.~Ye, and Q.~Zhang.
\newblock Learnable descent algorithm for nonsmooth nonconvex image
  reconstruction.
\newblock {\em SIAM Journal on Imaging Sciences}, 14(4):1532--1564, 2021.

\bibitem{combettes2005signal}
P.~L. Combettes and V.~R. Wajs.
\newblock Signal recovery by proximal forward-backward splitting.
\newblock {\em Multiscale Modeling \& Simulation}, 4(4):1168--1200, 2005.

\bibitem{dang2015stochastic}
C.~D. Dang and G.~Lan.
\newblock Stochastic block mirror descent methods for nonsmooth and stochastic
  optimization.
\newblock {\em SIAM Journal on Optimization}, 25(2):856--881, 2015.

\bibitem{defazio2014saga}
A.~Defazio, F.~Bach, and S.~Lacoste-Julien.
\newblock Saga: A fast incremental gradient method with support for
  non-strongly convex composite objectives.
\newblock In {\em Advances in neural information processing systems}, pages
  1646--1654, 2014.

\bibitem{duchi2009efficient}
J.~Duchi and Y.~Singer.
\newblock Efficient online and batch learning using forward backward splitting.
\newblock {\em The Journal of Machine Learning Research}, 10:2899--2934, 2009.

\bibitem{dunner2016primal}
C.~D{\"u}nner, S.~Forte, M.~Tak{\'a}c, and M.~Jaggi.
\newblock Primal-dual rates and certificates.
\newblock In {\em International Conference on Machine Learning}, pages
  783--792. PMLR, 2016.

\bibitem{fan2021safe}
Z.~Fan, H.~Fang, and M.~P. Friedlander.
\newblock Safe-screening rules for atomic-norm regularization.
\newblock {\em arXiv preprint arXiv:2107.11373}, 2021.

\bibitem{fercoq2015mind}
O.~Fercoq, A.~Gramfort, and J.~Salmon.
\newblock Mind the duality gap: safer rules for the lasso.
\newblock In {\em International Conference on Machine Learning}, pages
  333--342, 2015.

\bibitem{han2015deep}
S.~Han, H.~Mao, and W.~J. Dally.
\newblock Deep compression: Compressing deep neural networks with pruning,
  trained quantization and huffman coding.
\newblock {\em arXiv preprint arXiv:1510.00149}, 2015.

\bibitem{he2017channel}
Y.~He, X.~Zhang, and J.~Sun.
\newblock Channel pruning for accelerating very deep neural networks.
\newblock In {\em Proceedings of the IEEE international conference on computer
  vision}, pages 1389--1397, 2017.

\bibitem{jia2021psgr}
H.~Jia, H.~Tang, G.~Ma, W.~Cai, H.~Huang, L.~Zhan, and Y.~Xia.
\newblock Psgr: Pixel-wise sparse graph reasoning for covid-19 pneumonia
  segmentation in ct images.
\newblock {\em arXiv preprint arXiv:2108.03809}, 2021.

\bibitem{johnson2015blitz}
T.~Johnson and C.~Guestrin.
\newblock Blitz: A principled meta-algorithm for scaling sparse optimization.
\newblock In {\em International Conference on Machine Learning}, pages
  1171--1179, 2015.

\bibitem{kruger2003frechet}
A.~Y. Kruger.
\newblock On fr{\'e}chet subdifferentials.
\newblock {\em Journal of Mathematical Sciences}, 116(3):3325--3358, 2003.

\bibitem{lewis2002active}
A.~S. Lewis.
\newblock Active sets, nonsmoothness, and sensitivity.
\newblock {\em SIAM Journal on Optimization}, 13(3):702--725, 2002.

\bibitem{lewis2016proximal}
A.~S. Lewis and S.~J. Wright.
\newblock A proximal method for composite minimization.
\newblock {\em Mathematical Programming}, 158(1):501--546, 2016.

\bibitem{li2022local}
J.~Li, F.~Huang, and H.~Huang.
\newblock Local stochastic bilevel optimization with momentum-based variance
  reduction.
\newblock {\em arXiv preprint arXiv:2205.01608}, 2022.

\bibitem{liang2017activity}
J.~Liang, J.~Fadili, and G.~Peyr{\'e}.
\newblock Activity identification and local linear convergence of
  forward--backward-type methods.
\newblock {\em SIAM Journal on Optimization}, 27(1):408--437, 2017.

\bibitem{liang2022variable}
J.~Liang and C.~Poon.
\newblock Variable screening for sparse online regression.
\newblock {\em Journal of Computational and Graphical Statistics}, pages 1--51,
  2022.

\bibitem{lions1979splitting}
P.-L. Lions and B.~Mercier.
\newblock Splitting algorithms for the sum of two nonlinear operators.
\newblock {\em SIAM Journal on Numerical Analysis}, 16(6):964--979, 1979.

\bibitem{lustig2008compressed}
M.~Lustig, D.~L. Donoho, J.~M. Santos, and J.~M. Pauly.
\newblock Compressed sensing mri.
\newblock {\em IEEE signal processing magazine}, 25(2):72--82, 2008.

\bibitem{mordukhovich2006frechet}
B.~S. Mordukhovich, N.~M. Nam, and N.~Yen.
\newblock Fr{\'e}chet subdifferential calculus and optimality conditions in
  nondifferentiable programming.
\newblock {\em Optimization}, 55(5-6):685--708, 2006.

\bibitem{ndiaye2017gap}
E.~Ndiaye, O.~Fercoq, A.~Gramfort, and J.~Salmon.
\newblock Gap safe screening rules for sparsity enforcing penalties.
\newblock {\em The Journal of Machine Learning Research}, 18(1):4671--4703,
  2017.

\bibitem{ndiaye2020screening}
E.~Ndiaye, O.~Fercoq, and J.~Salmon.
\newblock Screening rules and its complexity for active set identification.
\newblock {\em arXiv preprint arXiv:2009.02709}, 2020.

\bibitem{ng2004feature}
A.~Y. Ng.
\newblock Feature selection, l 1 vs. l 2 regularization, and rotational
  invariance.
\newblock In {\em Proceedings of the twenty-first international conference on
  Machine learning}, page~78, 2004.

\bibitem{poon2018local}
C.~Poon, J.~Liang, and C.~Schoenlieb.
\newblock Local convergence properties of saga/prox-svrg and acceleration.
\newblock In {\em International Conference on Machine Learning}, pages
  4124--4132. PMLR, 2018.

\bibitem{rakotomamonjy2019screening}
A.~Rakotomamonjy, G.~Gasso, and J.~Salmon.
\newblock Screening rules for lasso with non-convex sparse regularizers.
\newblock In {\em International Conference on Machine Learning}, pages
  5341--5350, 2019.

\bibitem{richtarik2014iteration}
P.~Richt{\'a}rik and M.~Tak{\'a}{\v{c}}.
\newblock Iteration complexity of randomized block-coordinate descent methods
  for minimizing a composite function.
\newblock {\em Mathematical Programming}, 144(1-2):1--38, 2014.

\bibitem{shalev2011stochastic}
S.~Shalev-Shwartz and A.~Tewari.
\newblock Stochastic methods for l 1-regularized loss minimization.
\newblock {\em The Journal of Machine Learning Research}, 12:1865--1892, 2011.

\bibitem{shen2017accelerated}
Z.~Shen, H.~Qian, T.~Mu, and C.~Zhang.
\newblock Accelerated doubly stochastic gradient algorithm for large-scale
  empirical risk minimization.
\newblock In {\em Proceedings of the 26th International Joint Conference on
  Artificial Intelligence}, pages 2715--2721, 2017.

\bibitem{shevade2003simple}
S.~K. Shevade and S.~S. Keerthi.
\newblock A simple and efficient algorithm for gene selection using sparse
  logistic regression.
\newblock {\em Bioinformatics}, 19(17):2246--2253, 2003.

\bibitem{simon2013sparse}
N.~Simon, J.~Friedman, T.~Hastie, and R.~Tibshirani.
\newblock A sparse-group lasso.
\newblock {\em Journal of computational and graphical statistics},
  22(2):231--245, 2013.

\bibitem{tibshirani1996regression}
R.~Tibshirani.
\newblock Regression shrinkage and selection via the lasso.
\newblock {\em Journal of the Royal Statistical Society: Series B
  (Methodological)}, 58(1):267--288, 1996.

\bibitem{tibshirani2013lasso}
R.~J. Tibshirani et~al.
\newblock The lasso problem and uniqueness.
\newblock {\em Electronic Journal of statistics}, 7:1456--1490, 2013.

\bibitem{wang2014randomized}
H.~Wang and A.~Banerjee.
\newblock Randomized block coordinate descent for online and stochastic
  optimization.
\newblock {\em arXiv preprint arXiv:1407.0107}, 2014.

\bibitem{wright2010sparse}
J.~Wright, Y.~Ma, J.~Mairal, G.~Sapiro, T.~S. Huang, and S.~Yan.
\newblock Sparse representation for computer vision and pattern recognition.
\newblock {\em Proceedings of the IEEE}, 98(6):1031--1044, 2010.

\bibitem{wu2020intermittent}
Y.~Wu, Z.~Wang, Z.~Jia, Y.~Shi, and J.~Hu.
\newblock Intermittent inference with nonuniformly compressed multi-exit neural
  network for energy harvesting powered devices.
\newblock In {\em 2020 57th ACM/IEEE Design Automation Conference (DAC)}, pages
  1--6. IEEE, 2020.

\bibitem{wu2020enabling}
Y.~Wu, Z.~Wang, Y.~Shi, and J.~Hu.
\newblock Enabling on-device cnn training by self-supervised instance filtering
  and error map pruning.
\newblock {\em IEEE Transactions on Computer-Aided Design of Integrated
  Circuits and Systems}, 39(11):3445--3457, 2020.

\bibitem{wu2022fairprune}
Y.~Wu, D.~Zeng, X.~Xu, Y.~Shi, and J.~Hu.
\newblock Fairprune: Achieving fairness through pruning for dermatological
  disease diagnosis.
\newblock {\em arXiv preprint arXiv:2203.02110}, 2022.

\bibitem{xiao2014proximal}
L.~Xiao and T.~Zhang.
\newblock A proximal stochastic gradient method with progressive variance
  reduction.
\newblock {\em SIAM Journal on Optimization}, 24(4):2057--2075, 2014.

\bibitem{xu2016multi}
J.~Xu, J.~Liu, J.~Yin, and C.~Sun.
\newblock A multi-label feature extraction algorithm via maximizing featurxue
  variance and feature-label dependence simultaneously.
\newblock {\em Knowledge-Based Systems}, 98:172--184, 2016.

\bibitem{yuan2006model}
M.~Yuan and Y.~Lin.
\newblock Model selection and estimation in regression with grouped variables.
\newblock {\em Journal of the Royal Statistical Society: Series B (Statistical
  Methodology)}, 68(1):49--67, 2006.

\bibitem{zhao2014accelerated}
T.~Zhao, M.~Yu, Y.~Wang, R.~Arora, and H.~Liu.
\newblock Accelerated mini-batch randomized block coordinate descent method.
\newblock {\em Advances in neural information processing systems},
  27:3329--3337, 2014.

\bibitem{zhu20031}
J.~Zhu, S.~Rosset, R.~Tibshirani, and T.~J. Hastie.
\newblock 1-norm support vector machines.
\newblock In {\em Advances in neural information processing systems}, page
  None. Citeseer, 2003.

\bibitem{zou2005regularization}
H.~Zou and T.~Hastie.
\newblock Regularization and variable selection via the elastic net.
\newblock {\em Journal of the royal statistical society: series B (statistical
  methodology)}, 67(2):301--320, 2005.

\end{thebibliography}


\end{document}